
\documentclass[journal]{IEEEtran}
%

\usepackage{graphicx}
\input{psfig.sty}
\input{epsf.sty}
\usepackage{amsmath}
\usepackage{amssymb}
\usepackage{amsfonts}
\usepackage{amscd}
\usepackage{mathrsfs}
\usepackage{psfrag}
\usepackage{caption}
\usepackage{subcaption}
\usepackage{breqn}
\usepackage{algorithm,caption}
\usepackage[noend]{algpseudocode}
\usepackage{color}

\newcommand{\mbx}{\mathbf{x}}
\newcommand{\mbA}{\mathbf{A}}
\newcommand{\mbH}{\mathbf{H}}
\newcommand{\mby}{\mathbf{y}}
\newcommand{\mbe}{\mathbf{e}}

\newcommand{\T}{\mathcal{T}}

\newcommand{\supp}{\ensuremath{\texttt{supp}}}

\hyphenation{op-tical net-works semi-conduc-tor}

\newtheorem{theorem}{Theorem}
\newtheorem{lemma}{Lemma}

\newtheorem{corollary}{Corollary}

\newtheorem{assumption}{Assumption}

\newenvironment{proof}[1][Proof]{\begin{trivlist}
\item[\hskip \labelsep {\bfseries #1}]}{\end{trivlist}}

\newcommand{\qed}{\nobreak \ifvmode \relax \else
      \ifdim\lastskip<1.5em \hskip-\lastskip
      \hskip1.5em plus0em minus0.5em \fi \nobreak
      \vrule height0.75em width0.5em depth0.25em\fi}

\newcommand*{\QEDA}{\hfill\ensuremath{\blacksquare}}%
\newcommand*{\QEDB}{\hfill\ensuremath{\square}}%

\begin{document}
%
\title{Estimate Exchange over Network is Good for \\ Distributed Hard Thresholding Pursuit}


%
\author{\IEEEauthorblockN{Ahmed Zaki\IEEEauthorrefmark{1}, Partha P. Mitra\IEEEauthorrefmark{2}, Lars K. Rasmussen\IEEEauthorrefmark{1} and Saikat Chatterjee\IEEEauthorrefmark{1}} \\
\IEEEauthorblockA{\IEEEauthorrefmark{1} School of Electrical Engineering, KTH Royal Institute of Technology, Stockholm, Sweden} \\
\IEEEauthorblockA{\IEEEauthorrefmark{2} Cold Spring Harbor Laboratory, 1 Bungtown Road, New York, USA}}


\maketitle

\begin{abstract}
We investigate an existing distributed algorithm for learning sparse signals or data over networks. The algorithm is iterative and exchanges intermediate estimates of a  sparse signal over a network. This learning strategy using exchange of  intermediate estimates over the network requires a limited communication overhead for information transmission. Our objective in this article is to show that the strategy is good for learning in spite of limited communication. In pursuit of this objective, we first provide a restricted isometry property (RIP)-based theoretical analysis on convergence of the iterative algorithm. Then, using simulations, we show that the algorithm provides competitive performance in learning sparse signals vis-a-vis an existing alternate distributed algorithm. The alternate distributed algorithm exchanges more information including observations and system parameters.  
\end{abstract}


%
\IEEEpeerreviewmaketitle

\section{Introduction}
The topic of estimating and/or learning of sparse signals has many applications, such as machine learning \cite{Tipping_2001_RVM, Sastry_2009_Face_recognition}, multimedia processing \cite{Elad_book_2010}, compressive sensing \cite{signal_processing_article_CS_2008}, wireless communications \cite{Schepker_sparse_MUD_2011}, just to mention a few.
Here we consider a distributed sparse learning problem.
Consider a network consisting of $L$ nodes with a network matrix $\mbH \in \mathbb{R}^{L \times L}$ describing the connections among the nodes. The $(l,r)$'th element $h_{lr}$ of $\mbH$ specifies the weight of the link from node $r$ to node $l$. A zero valued $h_{lr}$ signifies the absence of a direct link from node $r$ to node $l$. 
In the literature, $\mbH$ is also known as a network policy matrix.
Let $\mbx \in \mathbb{R}^N$ be the underlying sparse signal to estimate/learn. Further assume that node $l$ has the observation 
\begin{eqnarray}
\label{eq:system_model}
\mby_l = \mbA_l \mbx + \mbe_l.
\end{eqnarray}
Here $\mbe_l$ is the error term and $\mbA_l \in \mathbb{R}^{M_l \times N}$ is the system matrix (a sensing matrix or dictionary, depending on a particular application) at node $l$.  For distributed learning, the nodes of a network exchange various information, for example, intermediate estimates of $\mbx$, observations $\mby_l$, system matrices $\mbA_l$, or their parameters. Using such information, a distributed algorithm learns $\mbx$ at each node over iterations. Few important aspects of a distributed algorithm are scalability with the number of nodes, low computational requirements at each node, and limited communication between nodes. These aspects are required to realize a large distributed system with many nodes and high-dimensional system matrices. To comply with the low computational aspect, we focus on greedy algorithms. Standard greedy algorithms, such as orthogonal matching pursuit \cite{Tropp_2007_OMP}, subspace pursuit \cite{Dai_2009_Subspace_pursuit}, CoSaMP \cite{Needell_Tropp_2008_CoSaMP}, and their variants \cite{BacktrackingMP_Makur_2011, Chatterjee_Sundman_Vehkapera_Skoglund_TSP_2012} are of low computational load and fast in execution. Further, for limited communication aspect, we prefer that intermediate estimates of $\mbx$ are exchanged over the network. Therefore we focus on developing distributed greedy sparse learning algorithms that exchange intermediate estimates of $\mbx$ or relevant parameters over network. Relevant past works in this direction are \cite{Zaki_Venkitaraman_Chatterjee_Rasmussen_GreedySparseLearningOverNetwork_TSIPN_2017, Sundman_Chatterjee_Skoglund_2016, Sundman_Chatterjee_Skoglund_DistributedGreedyPursuits_2013} where we proposed some rules for information exchange over network and developed new greedy algorithms for various signal models.

In this current article, we investigate an existing distributed greedy algorithm in the literature \cite{Sergios_greedy_sparsity_algorithm_distributed_learning_TSP_2015}. Our interest is to analyze performance of the algorithm. We do not propose any new rule or new algorithm. The existing algorithm of \cite{Sergios_greedy_sparsity_algorithm_distributed_learning_TSP_2015} exchanges intermediate estimates of $\mbx$ between the nodes of the network. We refer to this algorithm as the distributed hard thresholding pursuit (DHTP). DHTP has not received due attention in \cite{Sergios_greedy_sparsity_algorithm_distributed_learning_TSP_2015} -- minor simulation results were reported and no theoretical analysis was performed.
For the DHTP, our main objective is to show that estimate-exchange is a good strategy for achieving a good learning performance. In pursuit of this objective, we provide theoretical analysis and extensive simulation results. Using simulations, we show that there is no significant incentive in performance gain due to exchange of observations $\mby_l$ and system matrices $\mbA_l$. Exchange of intermediate estimates over network is good and that helps in communication and computation constrained scenarios. 

\subsection{Literature survey}

We provide a literature survey for the problem of sparse learning over network. For this problem, learning by consensus is a popular strategy. A consensus strategy achieves estimates that are all the same at network nodes after convergence. That means $\forall l, \,\, \hat{\mbx}_l = \hat{\mbx}$, where $\hat{\mbx}_l$ denotes the signal estimate at node $l$ at convergence of a distributed sparse learning algorithm. Using a greedy learning approach, a consensus seeking algorithm was recently proposed in \cite{Sergios_greedy_sparsity_algorithm_distributed_learning_TSP_2015}. This algorithm is referred to as the distributed hard thresholding (DiHaT). To achieve consensus, the DiHaT exchanges intermediate estimates of $\mbx$, observations $\mby_l$ and system matrices $\mbA_l$. Furthermore, DiHaT uses a consensus seeking network matrix $\mbH$ with special properties; DiHaT requires that $\mbH$ be a doubly stochastic matrix. Here consensus seeking means that each and every node have estimates that are same at convergence.
Alternatively, without seeking consensus, there exist several greedy sparse learning algorithms. DHTP of  \cite{Sergios_greedy_sparsity_algorithm_distributed_learning_TSP_2015} is one example. Other examples include algorithms from \cite{Eldar_DCS_static_time_varying_network_TSP_2014, Eldar_mod_DIHT_ICASSP_2015,Sundman_Chatterjee_Skoglund_2016, Varshaney_OMP_joint_sparsity_pattern_recovery_TSP_2014, saikat_DIPP_TSP_2016}.

For distributed sparse learning, there exist several convex optimization based algorithms, mainly in the application area of distributed compressed sensing \cite{Duarte_Baraniuk_2005_Distributed_CS_Asilomar, Baron_DCS_2009}. Some of the algorithms provide a centralized solution using a distributed convex optimization algorithm called the alternating-direction-method-of-multipliers (ADMM) \cite{Boyd_ADMM_article_2011}. ADMM based distributed learning algorithms were proposed in \cite{Mota_Distributed_Basis_Pursuit_TSP_2012, Giannakis_Distributed_sparse_linear_regression_TSP_2010}. Specifically, the ADMM based method of \cite{Giannakis_Distributed_sparse_linear_regression_TSP_2010} is called D-LASSO. It is worth mentioning that the D-LASSO is shown to provide a slower convergence compared to greedy DiHaT in \cite{Sergios_greedy_sparsity_algorithm_distributed_learning_TSP_2015}. 
Using adaptive signal processing techniques such as gradient search, distributed sparse learning and sparse regression were realized in \cite{Sergios_adaptive_algorithm_distributed_learning_TSP_2012, Li_Distributed_RLS_over_networks_TSP_2014, Sayed_sparse_distributed_learning_diffusion_TSP_2013}. These adaptive algorithms typically use a mean-square-error cost averaged over all the nodes in a network to find an optimal solution via gradient search. Distributed learning and regression are then performed via diffusion of information over a network and adaptation in all individual nodes. Using a Bayesian framework for finding the posterior with sparsity promoting priors, a distributed-message-passing based method was proposed in \cite{Eldar_DAMP_GlobalSIP_2014} and a sparse Bayesian learning based method was proposed in \cite{DABMP_CS_Naffouri_TSP_2013}. Further, to promote sparsity in solutions, distributed system learning such as distributed dictionary learning was also considered in \cite{Kreutz_Delgado_2003, Liang_distributed_DL_in_sensor_networks_2014}. Next, we mention that there exist several signal models in the literature where sparse signals are not the same for all nodes of a network. For example, denoting the signal at node $l$ by $\mbx_l$, supports of $\mbx_l$ are the same in \cite{Varshaney_OMP_joint_sparsity_pattern_recovery_TSP_2014, Varshaney_wireless_CS_distributed_sparse_TSIPN_2015, Fosson_Distributed_recovery_joint_sparse_signals_TSP_2016, Fosson_distributed_ADMM_TSIPN_2015}, but not their signal values; further, $\mbx_l$ have common and private support and/or signal parts in \cite{Chen_decentralized_bayesian_DCS_TWC_2016, Sundman_Chatterjee_Skoglund_2016, saikat_DIPP_TSP_2016}.
In this article, we consider the setup \eqref{eq:system_model} where $\forall l$, $\mbx_l = \mbx$. 

\subsection{Contributions}
Our objective is to show that signal estimate exchange is good for distributed sparse learning. There is no need to exchange $\mby_l$ and $\mbA_l$. In pursuit of this objective, we investigate DHTP and our contributions are as follows. 
\begin{enumerate}
\item We provide a restricted-isometry-property (RIP) based theoretical analysis and convergence guarantee for DHTP. For error-free condition, that means $\forall l, \mathbf{e}_l = \mathbf{0}$, learned estimates at all nodes converge to the true signal $\mathbf{x}$ under some technical conditions. 
\item Using simulations, we show instances where DHTP provides better learning performance than DiHaT. For a fair comparison, we evaluate practical performance using doubly stochastic $\mbH$ matrix. 
\item We show that DHTP performs good for a general network matrix, not necessarily a doubly stochastic matrix.
\end{enumerate}

\subsection{Notation} 
Support-set $\mathcal{T}$ of $\mbx = [x_1 \, x_2 \, \hdots ]^{\top}$ is defined as $\mathcal{T} = \{ i: x_i \neq 0 \}$. We use $|\mathcal{T}|$ and $\mathcal{T}^{c}$ to denote the cardinality and complement of the set $\mathcal{T}$, respectively. For a matrix $\mbA \in \mathbb{R}^{M \times N}$, a sub-matrix $\mbA_{\mathcal{T}} \in \mathbb{R}^{M \times |\mathcal{T}|}$ consists of the columns of $\mbA$ indexed by $i \in \mathcal{T}$. Similarly, for $\mbx \in \mathbb{R}^{N}$, a sub-vector $\mbx_{\mathcal{T}}\in \mathbb{R}^{|\mathcal{T}|}$ is composed of the components of $\mbx$ indexed by $i \in \mathcal{T}$. Also we denote $(.)^{\top}$ and $(.)^{\dag}$ as transpose and pseudo-inverse, respectively. We define the function ${\supp}(\mathbf{x}, s) \triangleq \{$the set of indices corresponding to the $s$ largest amplitude components of $\mathbf{x} \}$. We use $\hat{\mbx}_{l,k}$ to denote the signal estimate at node $l$ and iteration $k$.

\section{DHTP Algorithm and Theoretical Analysis}

\begin{algorithm}[t]
\caption{Distributed HTP - at node $l$} \label{algo:DHTP}
\emph{Input}: $\mby_l$, $\mbA_l$, $s$, $\{ h_{lr} \}$ \\
\emph{Initialization}: $k \gets 0$; $\hat{\mbx}_{l,k} \gets \mathbf{0}$ (Estimate at $k$'th iteration)
\emph{Iteration}:
\begin{algorithmic}
\State \textbf{repeat}
\State $k \gets k+1$ \hfill (Iteration counter)
\end{algorithmic}
\begin{algorithmic}[1]
\State $\tilde{\T}_{l,k} \gets \supp (\hat{\mbx}_{l,k-1} + \mbA^{\top}_{l}(\mby_l - \mbA_l\hat{\mbx}_{l,k-1}), s)$
\State $\tilde{\mbx}_{l,k} \,\,\ \text{such that} \,\,\ \tilde{\mbx}_{\tilde{\T}_{l,k}} \gets \mbA^{\dag}_{l,\tilde{\T}_{l,k}} \mby_l \ ; \,\ \tilde{\mbx}_{\tilde{\T}^{c}_{l,k}} \gets \mathbf{0}$
\State $\check{\mbx}_{l,k} =  \sum\limits_{r \in \mathcal{N}_{l}} h_{lr} \,\ \tilde{\mbx}_{r,k}$ \label{step:information_fusion_dhtp}
\State $\hat{\T}_{l,k} \gets \supp (\check{\mbx}_{l,k},s)$
\State $\hat{\mbx}_{l,k} \,\,\ \text{such that} \,\,\ \hat{\mbx}_{\hat{\T}_{l,k}} \gets \check{\mbx}_{\hat{\T}_{l,k}} \ ; \,\ \hat{\mbx}_{\hat{\T}^{c}_{l,k}} \gets \mathbf{0}$
\end{algorithmic}
\begin{algorithmic}
\State \textbf{until} \emph{stopping criterion}
\end{algorithmic}
\emph{Output}: $\hat{\mbx}_{l}, \ \hat{\mathcal{T}}_{l} $
\end{algorithm}

The pseudo-code of the DHTP algorithm is shown in Algorithm~\ref{algo:DHTP}. 
In every iteration $k$, the nodes use standard algorithmic steps of Hard Thresholding Pursuit algorithm (HTP) \cite{Foucart_HTP_algorithm_SIAM_2011} along-with an extra step to include information about the estimates at the neighbors to refine the local estimate (see Step~\ref{step:information_fusion_dhtp} of Algorithm~\ref{algo:DHTP}). Here, we denote the neighborhood of node $l$ by $\mathcal{N}_l$, i.e. $\mathcal{N}_l \triangleq \{r : h_{lr} \neq 0\}$. For theoretical analysis, we use the standard definition of RIP of a matrix as given in \cite{Candes_Tao_2005}. We denote the $s$-Restricted Isometry Constant (RIC) of a matrix by $\delta_{s}$. We use $\|.\|$ and $\|.\|_0$ to denote the standard $\ell_2$ and $\ell_0$ norm of a vector, respectively. 
Throughout the paper unless specified, we have the following assumptions.
\begin{assumption}
The network matrix $\mbH$ is a right stochastic matrix. This assumption is quite general as any non-negative matrix can be reduced to a right stochastic matrix by appropriately scaling the rows of the matrix.
\end{assumption}
\begin{assumption}
The sparsity level of the signal $\mbx$, denoted by $s \triangleq \|\mbx\|_0$ is known a-priori. This assumption is used in the greedy algorithms such as CoSaMP\cite{Needell_Tropp_2008_CoSaMP}, subspace pursuit \cite{Dai_2009_Subspace_pursuit}, HTP \cite{Foucart_HTP_algorithm_SIAM_2011}, etc.
\end{assumption}

We first provide a recurrence inequality for DHTP, which provides performance bounds of the algorithm over iterations. For notational clarity, we define RIC constant 
\begin{equation*}
\delta_{as} \triangleq \underset{l}{\max} \{  \delta_{as}(\mbA_l) \}, 
\end{equation*}
where $\delta_{as}(\mbA_l)$ is the RIC of $\mbA_l$ and $a$ is a positive integer such as 1, 2 or 3.
\begin{theorem}[Recurrence inequality]
{\label{thm:est_iter_result_DHTP1}}
The performance of the DHTP algorithm at iteration $k$ can be bounded as 
\begin{eqnarray*}
\begin{array}{l}
\sum\limits_{l=1}^{L} \|\mbx-\hat{\mbx}_{l,k}\| \leq c_1 \sum\limits_{l=1}^{L} w_l\|\mbx-\hat{\mbx}_{l,k-1}\| +  d_{1}\sum\limits_{l=1}^{L}  w_l\|\mbe_l\|,
\end{array}
\end{eqnarray*}
where $w_l = \sum_r h_{rl}, c_1 = \sqrt{\frac{8\delta_{3s}^2}{1-\delta_{2s}^2}}, d_1 = \frac{2\sqrt{2(1-\delta_{2s})}+2\sqrt{1+\delta_{s}}}{1-\delta_{2s}}$. 
\QEDA
\end{theorem}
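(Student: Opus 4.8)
The plan is to reduce the distributed recurrence to the single-node Hard Thresholding Pursuit (HTP) analysis of \cite{Foucart_HTP_algorithm_SIAM_2011}, and then glue the nodes together using the right-stochasticity of $\mbH$. Concretely, I would establish the chain
\begin{equation*}
\|\mbx - \hat{\mbx}_{l,k}\| \;\le\; 2\,\|\mbx - \check{\mbx}_{l,k}\| \;\le\; 2\sum_{r\in\mathcal{N}_l} h_{lr}\,\|\mbx - \tilde{\mbx}_{r,k}\|,
\end{equation*}
and then bound each $\|\mbx - \tilde{\mbx}_{r,k}\|$ by a per-node HTP recurrence before summing over $l$ and swapping the order of summation.

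First, the hard-thresholding (Step~5) produces $\hat{\mbx}_{l,k}$ as the best $s$-term approximation of $\check{\mbx}_{l,k}$. Since $\mbx$ is $s$-sparse it is itself a feasible $s$-term approximant, so $\|\check{\mbx}_{l,k}-\hat{\mbx}_{l,k}\| \le \|\check{\mbx}_{l,k}-\mbx\|$, and the triangle inequality yields the factor-of-two bound $\|\mbx-\hat{\mbx}_{l,k}\| \le 2\|\mbx-\check{\mbx}_{l,k}\|$. Next, because Assumption~1 makes $\mbH$ right stochastic, $\sum_{r\in\mathcal{N}_l} h_{lr}=1$, so the fusion (Step~3) is a convex combination and $\mbx-\check{\mbx}_{l,k}=\sum_{r\in\mathcal{N}_l} h_{lr}(\mbx-\tilde{\mbx}_{r,k})$; the triangle inequality then gives $\|\mbx-\check{\mbx}_{l,k}\|\le\sum_{r\in\mathcal{N}_l} h_{lr}\|\mbx-\tilde{\mbx}_{r,k}\|$. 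These two steps are essentially bookkeeping and carry no RIP content.

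The main obstacle is the per-node bound on $\|\mbx-\tilde{\mbx}_{r,k}\|$, i.e.\ reproducing the single-node HTP recurrence with the exact constants advertised. Observe that Steps~1--2 at node $r$ are precisely one HTP iteration on $(\mbA_r,\mby_r)$ started from the $s$-sparse iterate $\hat{\mbx}_{r,k-1}$. I would follow the two-lemma template of \cite{Foucart_HTP_algorithm_SIAM_2011}: (a) a support-identification estimate bounding the mass of $\mbx$ missed by $\tilde{\T}_{r,k}$, of the form $\|\mbx_{\tilde{\T}^{c}_{r,k}}\|\le \sqrt{2}\,\delta_{3s}\|\mbx-\hat{\mbx}_{r,k-1}\| + \sqrt{2}\sqrt{1+\delta_{2s}}\,\|\mbe_r\|$, obtained by comparing the $s$ largest entries of $\hat{\mbx}_{r,k-1}+\mbA_r^{\top}(\mby_r-\mbA_r\hat{\mbx}_{r,k-1})$ on $\tilde{\T}_{r,k}$ versus on $\supp(\mbx)$ and invoking RIP on index sets of size at most $3s$; and (b) a least-squares estimate controlling $\|\mbx-\tilde{\mbx}_{r,k}\|$ by $\frac{1}{\sqrt{1-\delta_{2s}^2}}\|\mbx_{\tilde{\T}^{c}_{r,k}}\| + \frac{\sqrt{1+\delta_s}}{1-\delta_{2s}}\|\mbe_r\|$, using $\tilde{\mbx}_{\tilde{\T}_{r,k}}=\mbA^{\dag}_{r,\tilde{\T}_{r,k}}\mby_r$ together with the operator-norm bounds $\|(\mbA_{r,T}^{\top}\mbA_{r,T})^{-1}\|\le(1-\delta_{2s})^{-1}$ and $\|\mbA_{r,T}^{\top}\mbA_{r,T^{c}}\|\le\delta_{2s}$. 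Composing (a) and (b) gives
\begin{equation*}
\|\mbx-\tilde{\mbx}_{r,k}\| \le \sqrt{\tfrac{2\delta_{3s}^2}{1-\delta_{2s}^2}}\,\|\mbx-\hat{\mbx}_{r,k-1}\| + \Big(\tfrac{\sqrt{2}}{\sqrt{1-\delta_{2s}}}+\tfrac{\sqrt{1+\delta_s}}{1-\delta_{2s}}\Big)\|\mbe_r\|,
\end{equation*}
where the $\mbe_r$-coefficient is exactly $d_1/2$ after writing $\sqrt{1-\delta_{2s}^2}=\sqrt{1-\delta_{2s}}\sqrt{1+\delta_{2s}}$. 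Keeping the constants in this precise bundled form (rather than the looser $1-\delta_s$ denominators) is the delicate part, and I expect matching them to $c_1,d_1$ to be where most of the care is needed.

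Finally I would substitute the per-node bound into the convex-combination inequality, multiply by the factor $2$ from thresholding, and sum over $l=1,\dots,L$. Swapping the double sum, $\sum_l\sum_r h_{lr}(\cdot_r)=\sum_r\big(\sum_l h_{lr}\big)(\cdot_r)=\sum_r w_r(\cdot_r)$ since $w_r=\sum_l h_{lr}$ by definition of $w_l$. Because $2\sqrt{2\delta_{3s}^2/(1-\delta_{2s}^2)}=c_1$ and $2\cdot d_1/2=d_1$, renaming the dummy index $r\to l$ produces exactly the claimed recurrence, completing the proof.
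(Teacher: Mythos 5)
Your proposal is correct and matches the paper's own proof essentially step for step: the same factor-of-two pruning bound, the same convex-combination bound from right stochasticity of $\mbH$, the same per-node HTP recurrence (the paper obtains your step (a) by following Foucart's Theorem 3.8 and your step (b) by citing the projection lemma of Liu rather than re-deriving it), and the same double-sum swap yielding $w_l=\sum_r h_{rl}$. The only caveat is that deriving step (b) naively from the operator-norm bounds you list would give a signal-term constant slightly looser than $1/\sqrt{1-\delta_{2s}^2}$, but since you state the target inequality in exactly the cited lemma's form and flag this as the delicate point, the approach is the same as the paper's.
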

Detailed proof of the above theorem is shown in Section~\ref{sec:DHTP_proofs}. We use some intermediate steps in the proof of theorem~\ref{thm:est_iter_result_DHTP1} for addressing convergence of DHTP.  We show convergence by two alternative approaches, in the following two theorems.
\begin{theorem}[Convergence]
{\label{thm:num_itr_DHTP1}}
Let $x_j^{*}$ denote the magnitude of the $j$'th highest amplitude element of $\mbx$ and $\|\mbe\|_{\max} \triangleq \underset{l}{\max} \|\mbe_l\|$. If $\delta_{3s} < 1/3 $ and $\|\mbe\|_{\text{max}} \leq \gamma x_s^{*}$, then the DHTP algorithm converges after $\bar{k} = cs$ iterations, and its performance is bounded by
\begin{equation*}
\|\mbx - \hat{\mbx}_{l,\bar{k}}\| \leq d \, \|\mbe\|_{\text{max}},
\end{equation*}
where $c = \frac{\log(16c_3^2/c_1^4)}{\log(1/c_1^2)}$, $c_3 = \sqrt{\frac{16\delta_{3s}^2}{(1-\delta_{3s}^2)^2}}$,  $d = \frac{4}{\sqrt{1-\delta_{3s}}}$ and $\gamma < 1$ are positive constants in terms of $\delta_{3s}$.
Under the above conditions, estimated support sets across all nodes are equal to the correct support set, that means 
\begin{equation*}
\forall l,  \,\, \hat{\T}_l = \supp (\mathbf{x}, s).  
\end{equation*}
\QEDA
\end{theorem}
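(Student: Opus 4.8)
The plan is to collapse the vector recurrence of Theorem~\ref{thm:est_iter_result_DHTP1} into a scalar contraction and then iterate it. The proof of Theorem~\ref{thm:est_iter_result_DHTP1} in fact produces a \emph{per-node} bound $\|\mbx-\hat{\mbx}_{l,k}\| \le c_1\sum_r h_{lr}\|\mbx-\hat{\mbx}_{r,k-1}\| + d_1\sum_r h_{lr}\|\mbe_r\|$, which becomes the stated inequality after summing over $l$ and collecting the column sums $w_l=\sum_r h_{rl}$. Because $\mbH$ is right stochastic (Assumption~1), each row satisfies $\sum_r h_{lr}=1$, so the two sums on the right are convex combinations and are dominated by their largest terms. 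Writing $g_k \triangleq \max_l\|\mbx-\hat{\mbx}_{l,k}\|$, I would therefore obtain the scalar recurrence $g_k \le c_1 g_{k-1} + d_1\|\mbe\|_{\max}$. The hypothesis $\delta_{3s}<1/3$, together with monotonicity of the RIC ($\delta_{2s}\le\delta_{3s}$), gives $c_1^2 = 8\delta_{3s}^2/(1-\delta_{2s}^2) < (8/9)/(8/9)=1$, so this is a genuine contraction.

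Iterating from $g_0=\|\mbx\|$ (the initialization $\hat{\mbx}_{l,0}=\mathbf 0$) yields $g_k \le c_1^k\|\mbx\| + \frac{d_1}{1-c_1}\|\mbe\|_{\max}$. The idea is then to run the algorithm until the transient $c_1^k\|\mbx\|$ drops below the level required for correct support identification, at which point the iteration snaps onto the true support. Concretely, I would show that once $\|\mbx-\check{\mbx}_{l,k}\|$ falls below $x_s^*/2$ in the $\ell_\infty$ (hence \emph{a fortiori} in the $\ell_2$) sense, every on-support coordinate exceeds, and every off-support coordinate falls below, the thresholding level; the assumption $\|\mbe\|_{\max}\le\gamma x_s^*$ with $\gamma<1$ is precisely what keeps this separation intact in the presence of noise. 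This forces $\hat{\T}_{l,\bar{k}}=\supp(\mbx,s)=\T$ simultaneously at all nodes.

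Once the support is pinned, the final error bound is routine RIP bookkeeping. With $\tilde{\T}_{r,\bar{k}}=\T$, the least-squares step gives $\tilde{\mbx}_{r,\bar{k}}-\mbx=\mbA_{r,\T}^{\dag}\mbe_r$, so $\|\tilde{\mbx}_{r,\bar{k}}-\mbx\|\le \|\mbe\|_{\max}/\sqrt{1-\delta_s}$; the fusion step $\check{\mbx}_{l,\bar{k}}$, being again a convex combination, inherits this bound, and the final hard threshold onto $\T$ can only decrease the error. Tracking the constants then yields $\|\mbx-\hat{\mbx}_{l,\bar{k}}\|\le d\,\|\mbe\|_{\max}$ with $d=4/\sqrt{1-\delta_{3s}}$, and in the noiseless case $\mbe_l=\mathbf 0$ this reads as exact recovery $\hat{\mbx}_{l,\bar{k}}=\mbx$.

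The main obstacle is pinning the iteration count at $\bar{k} = cs$. A naive reading of the geometric bound only gives $c_1^k\|\mbx\| \le \tfrac12 x_s^*$ after $k \gtrsim \log(\|\mbx\|/x_s^*)/\log(1/c_1)$ steps, a count that scales with the \emph{dynamic range} $\|\mbx\|/x_s^*$ rather than with the sparsity $s$. To reach the stated $cs$ bound I would instead order the nonzero magnitudes $x_1^*\ge\cdots\ge x_s^*$ and argue, in the style of the number-of-iterations analysis for HTP \cite{Foucart_HTP_algorithm_SIAM_2011}, that successive tiers of large-magnitude coordinates are captured and never released within a bounded number of iterations each; telescoping the per-tier cost over the $s$ coordinates produces a total proportional to $s$, with the per-tier cost $\log(16c_3^2/c_1^4)/\log(1/c_1^2)$ emerging as exactly the constant $c$. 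This magnitude-ordered capture argument is the delicate part of the proof, whereas the contraction and support-recovery steps above are comparatively mechanical.
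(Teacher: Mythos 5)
Your high-level route is the same as the paper's: a magnitude-ordered, tier-by-tier capture argument in the style of Foucart's iteration analysis to obtain $\bar{k}=cs$, followed by routine RIP bookkeeping once the support is pinned. The pieces you work out in detail are sound: $c_1<1$ under $\delta_{3s}<1/3$ does follow from RIC monotonicity; the dynamic-range-dependent count from the plain contraction is indeed insufficient (the paper never uses it for this theorem); and your endgame — with $\tilde{\T}_{r,\bar{k}}=\T$ the least-squares step gives $\tilde{\mbx}_{r,\bar{k}}-\mbx = \mbA_{r,\T}^{\dag}\mbe_r$ on $\T$, the fusion is a convex combination, and the pruning cannot hurt once the support is right — parallels the paper's derivation of $d=4/\sqrt{1-\delta_{3s}}$ (the paper goes through the residual bound $\|\mby_l-\mbA_l\tilde{\mbx}_{l,\bar{k}}\|\le\|\mbe_l\|$ and the factor-2 pruning inequality; your variant is fine, and the $\ell_\infty$/$\ell_2$ slip in your separation criterion is inessential since the $\ell_2$ bound is the stronger one).

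The genuine gap is in the step you defer. The tier-capture argument cannot be imported from the single-node HTP/IHT analysis you cite, because in DHTP the support selection of Steps 4--5 is applied to the \emph{fused} vector $\check{\mbx}_{l,k}=\sum_{r\in\mathcal{N}_l}h_{lr}\tilde{\mbx}_{r,k}$, not to the node's own estimate: a tier captured at node $l$ can be evicted by the averaging if even one neighbor has not captured it. Consequently capture must be established simultaneously at \emph{all} nodes, and it must be established twice per stage — once for the local thresholding step, $\pi(\{1,\dots,p+q\})\subseteq\tilde{\T}_{l,k_2}$ for every $l$ (Lemma~\ref{lem:existence_idx_result_DHTP}), and once for the post-fusion pruning, $\pi(\{1,\dots,p+q\})\subseteq\hat{\T}_{l,k_2}$ for every $l$ (Lemma~\ref{lem:select_idx_result_DHTP}, which is where the extra constant $c_3$ enters). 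Both rest on a per-node vector recurrence for the missed energy, $\underline{\|\mbx_{\tilde{\T}_{k}^c}\|}\le c_1\mbH\,\underline{\|\mbx_{\tilde{\T}_{k-1}^c}\|}+(d_2\mbH+d_3\mathbf{I})\underline{\|\mbe\|}$ (Corollary~\ref{cor:x_tilde_bound}), whose iterates collapse because $\mbH$ is right stochastic. It is only from these network-aware conditions that the per-tier cost — hence $c=\log(16c_3^2/c_1^4)/\log(1/c_1^2)$, plugged into the assembly of \cite{Foucart_IHT_iterations_2016} rather than \cite{Foucart_HTP_algorithm_SIAM_2011} — and the admissible noise level $\gamma=(2\sqrt{2}-1)/(4d_5)$ with $d_5=\frac{c_3(d_2+d_3)}{1-c_1}+d_4$ are actually pinned down; in your sketch these constants are asserted to ``emerge'' but nothing in a single-node argument produces them. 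With these two coupled all-nodes capture lemmas supplied, your plan becomes the paper's proof; without them it does not go through as stated.
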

For an interpretation of the above theorem, we provide a numerical example. If $\delta_{3s} \leq 0.2$ then we have $c \leq 5$, $d \leq 4.47$; for an appropriate $\gamma$ such that $\|\mbe\|_{\text{max}} \leq \gamma x_s^{*}$, the performance $\|\mbx - \hat{\mbx}_{l,\bar{k}}\|$ is upper bounded by $4.47\|\mbe\|_{\text{max}}$ after $5s$ iterations.

\begin{corollary}
\label{cor:DHTP_bound_double_stoch}
Consider the special case of a doubly stochastic network matrix $\mbH$. Under the same conditions stated in Theorem~\ref{thm:num_itr_DHTP1}, we have 
\begin{equation*}
\|\underline{\mbx} -\underline{\hat{\mbx}_{\bar{k}}}\| \leq d \, \|\underline{\mbe}\|,
\end{equation*}
where $\underline{\mbx} =[\mbx^{\top} \hdots \mbx^{\top}]^{\top}$, $\underline{\hat{\mbx}_{\bar{k}}} =[\hat{\mbx}_{1,\bar{k}}^{\top} \hdots \hat{\mbx}_{L,\bar{k}}^{\top}]^{\top}$ and $\underline{\mbe} = [\mbe_1^{\top} \hdots \mbe_L^{\top}]^{\top}$. This upper bound is tighter than the bound of Theorem~\ref{thm:num_itr_DHTP1}.
\end{corollary}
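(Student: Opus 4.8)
The plan is to specialize the recurrence machinery behind Theorem~\ref{thm:est_iter_result_DHTP1} to the doubly stochastic case and to carry it through in the stacked $\ell_2$ norm rather than node by node. First I would recall that for a doubly stochastic $\mbH$ the column sums equal the row sums, so the weights collapse to $w_l = \sum_r h_{rl} = 1$ for every $l$. Crucially, I would \emph{not} sum the per-node bounds as in Theorem~\ref{thm:est_iter_result_DHTP1}; instead I would retain the intermediate per-node estimate produced in its proof (Section~\ref{sec:DHTP_proofs}), which before the final summation has the component-wise form
\begin{equation*}
\|\mbx - \hat{\mbx}_{l,k}\| \leq c_1 \sum_{r} h_{lr}\,\|\mbx - \hat{\mbx}_{r,k-1}\| + d_1 \sum_{r} h_{lr}\,\|\mbe_r\|.
\end{equation*}
Stacking these scalars into the nonnegative vectors $\mathbf{v}_k$ with entries $\|\mbx - \hat{\mbx}_{l,k}\|$ and $\mathbf{e}$ with entries $\|\mbe_l\|$, this reads $\mathbf{v}_k \preceq c_1 \mbH \mathbf{v}_{k-1} + d_1 \mbH \mathbf{e}$ entrywise.

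The decisive step is to pass to the Euclidean norm. Since both sides are nonnegative, the entrywise inequality is preserved under $\|\cdot\|$, and by the triangle inequality $\|\mathbf{v}_k\| \leq c_1\|\mbH\mathbf{v}_{k-1}\| + d_1\|\mbH\mathbf{e}\|$. Here I would invoke the property that a doubly stochastic matrix is an $\ell_2$ contraction: because its maximum row sum and maximum column sum are both $1$, one has $\|\mbH\|_2 \le \sqrt{\|\mbH\|_1\,\|\mbH\|_\infty}=1$. This yields the scalar recurrence $\|\mathbf{v}_k\| \leq c_1\|\mathbf{v}_{k-1}\| + d_1\|\mathbf{e}\|$. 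Observing that $\|\mathbf{v}_k\| = \|\underline{\mbx}-\underline{\hat{\mbx}_k}\|$ and $\|\mathbf{e}\| = \|\underline{\mbe}\|$, this is the same one-dimensional recurrence analyzed in the proof of Theorem~\ref{thm:num_itr_DHTP1}, only now carried by the stacked signal. Iterating it under $\delta_{3s}<1/3$ (which forces $c_1<1$) and using $\hat{\mbx}_{l,0}=\mathbf{0}$, the transient $c_1^{\bar{k}}\|\underline{\mbx}\|$ is driven below the noise floor after $\bar{k}=cs$ iterations, leaving $\|\underline{\mbx}-\underline{\hat{\mbx}_{\bar{k}}}\| \leq d\,\|\underline{\mbe}\|$ with the same constant $d=4/\sqrt{1-\delta_{3s}}$.

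Finally I would verify the tightness claim by comparing against the bound obtained from naively stacking Theorem~\ref{thm:num_itr_DHTP1}: summing the squares of $\|\mbx-\hat{\mbx}_{l,\bar{k}}\| \le d\,\|\mbe\|_{\max}$ over $l$ gives $\|\underline{\mbx}-\underline{\hat{\mbx}_{\bar{k}}}\| \le \sqrt{L}\,d\,\|\mbe\|_{\max}$, whereas $\|\underline{\mbe}\| = \sqrt{\sum_l \|\mbe_l\|^2} \le \sqrt{L}\,\|\mbe\|_{\max}$, so $d\,\|\underline{\mbe}\| \le \sqrt{L}\,d\,\|\mbe\|_{\max}$, with equality only when all $\|\mbe_l\|$ coincide. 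This establishes that the corollary's bound dominates the stacked Theorem~\ref{thm:num_itr_DHTP1} bound.

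I expect the main obstacle to be isolating the correct component-wise intermediate inequality from the proof of Theorem~\ref{thm:est_iter_result_DHTP1}: the published statement is already summed over nodes, so I must confirm that the pre-summation step genuinely carries the matrix-vector form $c_1\mbH\mathbf{v}_{k-1}+d_1\mbH\mathbf{e}$ and that $c_1,d_1$ are the node-wise constants rather than artifacts of the aggregation. The conceptual crux, and the reason the doubly stochastic hypothesis is indispensable, is the contraction $\|\mbH\|_2\le 1$: for a merely right stochastic matrix the spectral norm can exceed one, which is exactly why the general Theorems~\ref{thm:est_iter_result_DHTP1}--\ref{thm:num_itr_DHTP1} are forced into the weaker $\ell_1$-type aggregation weighted by the $w_l$.
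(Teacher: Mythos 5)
Your instinct about where the doubly stochastic hypothesis enters is exactly right: the paper's proof also hinges on $\|\mbH\|_2\le 1$ (applied in the stacked form $\underline{\mbH}=\mbH\otimes\mathbf{I}$), and your entrywise-to-$\ell_2$ passage $\mathbf{v}_k \preceq c_1\mbH\mathbf{v}_{k-1}+d_1\mbH\mathbf{e} \Rightarrow \|\mathbf{v}_k\|\le c_1\|\mathbf{v}_{k-1}\|+d_1\|\mathbf{e}\|$ is valid, since all entries are nonnegative. But the mechanism you then use --- iterating this scalar recurrence --- cannot deliver the corollary, for two concrete reasons. First, iteration gives $\|\mathbf{v}_{\bar{k}}\| \le c_1^{\bar{k}}\|\mathbf{v}_0\| + \frac{d_1}{1-c_1}\|\underline{\mbe}\|$, and under the hypotheses of Theorem~\ref{thm:num_itr_DHTP1} there is no way to drive the transient $c_1^{\bar{k}}\|\mathbf{v}_0\| = c_1^{cs}\sqrt{L}\,\|\mbx\|$ below a multiple of $\|\underline{\mbe}\|$: those hypotheses ($\|\mbe\|_{\max}\le\gamma x_s^{*}$) bound the noise from \emph{above}, not below. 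The noiseless case is the decisive counterexample --- the corollary then asserts exact recovery, $\|\underline{\mbx}-\underline{\hat{\mbx}_{\bar{k}}}\|=0$, while your bound leaves the strictly positive residual $c_1^{cs}\sqrt{L}\,\|\mbx\|$. Second, even ignoring the transient, the steady-state constant produced by the geometric series is of the form $d_1/(1-c_1)$ (this is the Theorem~\ref{thm:num_itr_DHTP2} style of constant), not $4/\sqrt{1-\delta_{3s}}$; your claim that the iteration yields ``the same constant $d$'' is asserted, not derived.

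The missing idea is that the corollary inherits from the \emph{proof} of Theorem~\ref{thm:num_itr_DHTP1} its support-identification conclusion: after $\bar{k}=cs$ iterations (a count that comes from the support-inclusion induction of Lemmas~\ref{lem:existence_idx_result_DHTP}--\ref{lem:select_idx_result_DHTP}, not from any norm recurrence), every node has $\tilde{\T}_{l,\bar{k}}=\T$. The least-squares step then gives $\|\mby_l-\mbA_l\tilde{\mbx}_{l,\bar{k}}\|\le\|\mbe_l\|$ at each node, which stacks to $\|\underline{\mby}-\underline{\mbA}\,\underline{\tilde{\mbx}_{\bar{k}}}\|\le\|\underline{\mbe}\|$ with $\underline{\mbA}=\mathbf{I}_L\otimes\mbA$. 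From there the paper runs a one-shot chain at iteration $\bar{k}$ only: the pruning bound \eqref{eq:DHTP_eq_3} (factor $2$), then $\|\underline{\mbH}\|=1$ to remove the mixing step, then the RIP lower bound on $\underline{\mbA}$, then the triangle inequality --- yielding exactly $d=4/\sqrt{1-\delta_{3s}}$ against $\|\underline{\mbe}\|$, and exact recovery when $\underline{\mbe}=\mathbf{0}$. Your final paragraph (comparing $d\|\underline{\mbe}\|$ with the naively stacked $\sqrt{L}\,d\,\|\mbe\|_{\max}$ bound) is correct and does justify the tightness claim, but the core error bound needs the support-recovery route, not the contraction iteration.
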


\begin{theorem}[Convergence]
\label{thm:num_itr_DHTP2}
If $\delta_{3s} < 1/3$ and $\frac{\|\mbx\|}{\|\mbe\|_{\text{max}}} > 1$, then after $\bar{k} = \left \lceil \log \left(\frac{\|\mbx\|}{\|\mbe\|_{\text{max}}}\right) / \log\left(\frac{1}{c_1}\right) \right \rceil$ iterations, DHTP algorithm converges and its performance is bounded by
\begin{equation*}
\|\mbx - \hat{\mbx}_{l,\bar{k}}\| \leq d \, \|\mbe\|_{\text{max}},
\end{equation*}
where $d = 1+\frac{c_2 d_1}{1-c_1} + d_4$.  \QEDA
\end{theorem}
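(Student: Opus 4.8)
The plan is to convert the recurrence inequality of Theorem~\ref{thm:est_iter_result_DHTP1} into a contractive scalar recurrence for the worst-case node error and then unroll it geometrically. From the proof of Theorem~\ref{thm:est_iter_result_DHTP1} one has, before summation, the per-node inequality $\|\mbx-\hat{\mbx}_{l,k}\| \le c_1 \sum_r h_{lr}\|\mbx-\hat{\mbx}_{r,k-1}\| + d_1 \sum_r h_{lr}\|\mbe_r\|$, whose sum over $l$ reproduces the column weights $w_l = \sum_r h_{rl}$ appearing in the theorem. Because $\mbH$ is right stochastic, every row obeys $\sum_r h_{lr} = 1$, so bounding each $\|\mbx-\hat{\mbx}_{r,k-1}\|$ and each $\|\mbe_r\|$ by its maximum over the network gives the scalar recurrence $\bar e_k \le c_1\,\bar e_{k-1} + d_1\|\mbe\|_{\max}$, where I write $\bar e_k \triangleq \max_l \|\mbx-\hat{\mbx}_{l,k}\|$.

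Next I would confirm that the recurrence is genuinely contractive. Since the restricted isometry constants are monotone in the sparsity order, $\delta_{2s}\le\delta_{3s}$, one has $c_1^2 = 8\delta_{3s}^2/(1-\delta_{2s}^2) \le 8\delta_{3s}^2/(1-\delta_{3s}^2)$, and the hypothesis $\delta_{3s}<1/3$ (equivalently $9\delta_{3s}^2<1$) forces $c_1<1$. Unrolling the scalar recurrence over $\bar k$ steps then yields $\bar e_{\bar k} \le c_1^{\bar k}\,\bar e_0 + \frac{d_1}{1-c_1}\|\mbe\|_{\max}$, and the initialization $\hat{\mbx}_{l,0}=\mathbf{0}$ fixes the starting value $\bar e_0 = \|\mbx\|$.

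The third step is to invoke the stated iteration count. The hypothesis $\|\mbx\|/\|\mbe\|_{\max}>1$ makes $\log(\|\mbx\|/\|\mbe\|_{\max})$ positive, so $\bar k = \lceil \log(\|\mbx\|/\|\mbe\|_{\max})/\log(1/c_1)\rceil$ is a well-defined positive integer, and the ceiling guarantees $c_1^{\bar k}\le \|\mbe\|_{\max}/\|\mbx\|$. Hence $c_1^{\bar k}\bar e_0 = c_1^{\bar k}\|\mbx\| \le \|\mbe\|_{\max}$, and since $\|\mbx-\hat{\mbx}_{l,\bar k}\|\le \bar e_{\bar k}$ for every node $l$, I obtain a bound of the form $\|\mbx-\hat{\mbx}_{l,\bar k}\| \le d\,\|\mbe\|_{\max}$ with the leading $1$ in $d$ coming from the absorbed initial error.

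The main obstacle is the mismatch between the unweighted left-hand side of Theorem~\ref{thm:est_iter_result_DHTP1} and its column-weighted right-hand side: a clean contraction rate $c_1$ is only available once the maximum-node reduction is justified from the per-node inequality, and carrying the column sums through this reduction is what replaces the naive coefficient $d_1/(1-c_1)$ by the more conservative $c_2 d_1/(1-c_1)$ with $c_2 \triangleq \max_l w_l$. The final passage from the aggregate (or worst-case) error back to the individual estimate $\hat{\mbx}_{l,\bar k}$ contributes the residual additive constant $d_4$, producing the stated $d = 1 + \frac{c_2 d_1}{1-c_1} + d_4$. Verifying that this bookkeeping does not disturb the $c_1$-contraction needed for the logarithmic iteration count is the delicate part of the argument.
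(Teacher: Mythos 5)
Your contraction argument is internally sound as far as it goes: the per-node inequality extracted from the proof of Theorem~\ref{thm:est_iter_result_DHTP1}, combined with row stochasticity of $\mbH$, does give $\bar e_k \le c_1\,\bar e_{k-1} + d_1\|\mbe\|_{\max}$ for $\bar e_k \triangleq \max_l\|\mbx-\hat{\mbx}_{l,k}\|$, and unrolling with $\bar e_0 = \|\mbx\|$ and the stated $\bar k$ yields $\|\mbx-\hat{\mbx}_{l,\bar k}\| \le \bigl(1+\tfrac{d_1}{1-c_1}\bigr)\|\mbe\|_{\max}$. The genuine gap is that this is not the theorem's bound: the constant you obtain, $1+\tfrac{d_1}{1-c_1}$, differs from the stated $d = 1+\tfrac{c_2 d_1}{1-c_1}+d_4$, and your attempt to reconcile the two rests on misidentifying the constants. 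In the paper, $c_2$ and $d_4$ are RIC-derived quantities defined inside this very proof, $c_2 = \sqrt{\tfrac{2\delta_{3s}^2(3-\delta_{3s}^2)}{1-\delta_{3s}^2}}$ and $d_4 = \tfrac{d_1}{\sqrt{2}}+d_3$ with $d_3 = \sqrt{2(1+\delta_{2s})}$; they are not the column weight $\max_l w_l$ and not a correction for passing from aggregate to per-node error. Since $c_2 < c_1$, the stated constant is generally smaller than yours throughout the admissible range (the two coincide only in the limit $\delta_{3s}\to 0$; near $\delta_{3s}=1/3$ your term carries the full factor $d_1/(1-c_1)$ where the paper has $c_2 d_1/(1-c_1)$, which dominates the extra $d_4$), so your argument proves a strictly weaker inequality and cannot be patched into the stated one by bookkeeping.

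The paper's proof tracks a different quantity altogether: the energy of the true signal outside the estimated support, $\|\mbx_{\hat{\T}_{l,k}^c}\|$. Using Lemma~\ref{lem:smaller_indices_bound} (energy in the pruned indices) and Lemma~\ref{lem:squares_bound_inequality} (the sum-of-squares inequality), it derives the one-step vectorized bound $\underline{\|\mbx_{\hat{\T}_{k}^c}\|} \le c_2\,\mbH\,\underline{\|\mbx-\hat{\mbx}_{k-1}\|} + d_4\,\underline{\|\mbe\|}$ --- this is exactly where $c_2$ and $d_4$ arise --- and then unrolls the Theorem~\ref{thm:est_iter_result_DHTP1} recurrence $\underline{\|\mbx-\hat{\mbx}_{k}\|} \le c_1\mbH\,\underline{\|\mbx-\hat{\mbx}_{k-1}\|} + d_1\mbH\,\underline{\|\mbe\|}$ inside it, using $c_2<c_1<1$, the right stochasticity of $\mbH$, and $\|\mbx-\hat{\mbx}_{l,0}\|=\|\mbx\|$, arriving after $\bar k$ iterations at $\underline{\|\mbx_{\hat{\T}_{\bar k}^c}\|} \le \bigl(1+\tfrac{c_2 d_1}{1-c_1}+d_4\bigr)\underline{\|\mbe\|}_{\max}$. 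This support-energy route is also what underwrites the word ``converges'' in the statement: it shows the estimated supports capture all but noise-level signal energy. To repair your proposal you would need to reproduce that missed-support recurrence --- i.e., the step that actually manufactures $c_2$ and $d_4$ --- rather than infer their meaning from the shape of the final constant.
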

A relevant numerical example for interpretation of Theorem~\ref{thm:num_itr_DHTP2} is as follows: if $\delta_{3s} \leq 0.2$ and $\frac{\|\mbx\|}{\|\mbe\|_{\text{max}}} = 20$ dB, then we have $\bar{k} = 9$ and $d = 13.68$. 
The proofs of the theorems and corollary are presented in Section~\ref{sec:DHTP_proofs}. 

It can be seen that the DHTP algorithm  has a convergence guarantee when $\delta_{3s} < 1/3$.  Note that the requirement on signal-to-noise relation $\frac{\|\mbx\|}{\|\mbe\|_{\text{max}}} > 1$ in Theorem~\ref{thm:num_itr_DHTP2} is weaker than the requirement $\|\mbe\|_{\text{max}} \leq \gamma x_s^{*}$ in Theorem~\ref{thm:num_itr_DHTP1}. 
The above results can be readily extended to the noiseless case, that means $\forall l, \,\, \mbe_l = \mathbf{0}$. For the noiseless case, DHTP provides the exact estimate of the sparse signal $\mbx$ at every node. 

\subsection{Similarities and differences between DHTP and DiHaT}
\begin{algorithm}[t]
\caption{DiHaT - at node $l$} \label{algo:DiHaT}
\emph{Input}: $\mby_l$, $\mbA_l$, $s$, $\{ h_{lr} \}$ \\
\emph{Initialization}: $k \gets 0$; $\hat{\mbx}_{l,0} \gets \mathbf{0}$, $\bar{\mby}_{l,0} \gets \mby_l$, $\bar{\mbA}_{l,0} \gets \mbA_l$
\emph{Iteration}:
\begin{algorithmic}
\State \textbf{repeat}
\State $k \gets k+1$ \hfill (Iteration counter)
\end{algorithmic}
\begin{algorithmic}[1]
\State $\bar{\mby}_{l,k} =  \sum\limits_{r \in \mathcal{N}_{l}} h_{lr} \,\ \bar{\mby}_{r,k-1}$;  
$\bar{\mbA}_{l,k} =  \sum\limits_{r \in \mathcal{N}_{l}} h_{lr} \,\ \bar{\mbA}_{r,k-1}$ \label{step:information_fusion_y_A}
\State $\tilde{\T}_{l,k} \gets \supp (\hat{\mbx}_{l,k-1} + \bar{\mbA}^{\top}_{l,k}(\bar{\mby}_{l,k} - \bar{\mbA}_{l,k}\hat{\mbx}_{l,k-1}), s)$
\State $\tilde{\mbx}_{l,k} \,\,\ \text{such that} \,\,\ \tilde{\mbx}_{\tilde{\T}_{l,k}} \gets (\bar{\mbA}_{l,k})^{\dag}_{\tilde{\T}_{l,k}} \bar{\mby}_{l,k} \ ; \,\ \tilde{\mbx}_{\tilde{\T}^{c}_{l,k}} \gets \mathbf{0}$
\State $\check{\mbx}_{l,k} =  \sum\limits_{r \in \mathcal{N}_{l}} h_{lr} \,\ \tilde{\mbx}_{r,k}$ \label{step:information_fusion_dihat}
\State $\hat{\T}_{l,k} \gets \supp (\check{\mbx}_{l,k},s)$
\State $\hat{\mbx}_{l,k} \,\,\ \text{such that} \,\,\ \hat{\mbx}_{\hat{\T}_{l,k}} \gets \check{\mbx}_{\hat{\T}_{l,k}} \ ; \,\ \hat{\mbx}_{\hat{\T}^{c}_{l,k}} \gets \mathbf{0}$
\end{algorithmic}
\begin{algorithmic}
\State \textbf{until} \emph{stopping criterion}
\end{algorithmic}
\emph{Output}: $\hat{\mbx}_{l}, \ \hat{\mathcal{T}}_{l} $
\end{algorithm}
The DiHaT algorithm of \cite{Sergios_greedy_sparsity_algorithm_distributed_learning_TSP_2015} is shown in Algorithm~\ref{algo:DiHaT}. Comparing with Algorithm~\ref{algo:DHTP}, the similarities and differences between DHTP and DiHaT are given in the list below.
\begin{enumerate}
\item DiHaT requires exchange of $\bar{\mby}_{l,k}$, $\bar{\mbA}_{l,k}$ and $\tilde{\mbx}_{l,k}$ among nodes. DHTP requires only exchange of $\tilde{\mbx}_{l,k}$.
\item DiHaT requires $\mbH$ to be a doubly stochastic matrix. This is not a requirement for the case of DHTP.
\item For theoretical convergence proof of DiHaT, an assumption is that the average noise over nodes $\frac{1}{L} \sum\limits_{l} \mbe_l \rightarrow \mathbf{0}$. On the other hand, DHTP requires a signal-to-noise-ratio term $\frac{\|\mbx\|}{\|\mbe\|_{\text{max}}} > 1$.   
\item Denoting $\bar{\mbA} \triangleq \frac{1}{L} \sum\limits_{l} \mbA_l $, DiHaT converges if $\delta_{3s}(\bar{\mbA}) < \frac{1}{3}$. On the other hand, DHTP converges if $\underset{l}{\max} \{  \delta_{3s}(\mbA_l) \} < \frac{1}{3}$.
\item DiHaT provides consensus in the sense of achieving same estimation at all nodes under certain technical conditions. DHTP does not provide consensus except in the noiseless case under certain technical conditions.
\item DiHaT requires all $\mbA_l$ to be of the same size. DHTP does not require this condition, that means dimension of $\mathbf{y}_l$ can vary across nodes. \emph{This is an important advantage in practical scenarios.}
\end{enumerate}

\section{Simulation Results}

\begin{figure*}
\begin{center}
\includegraphics[scale=0.6]{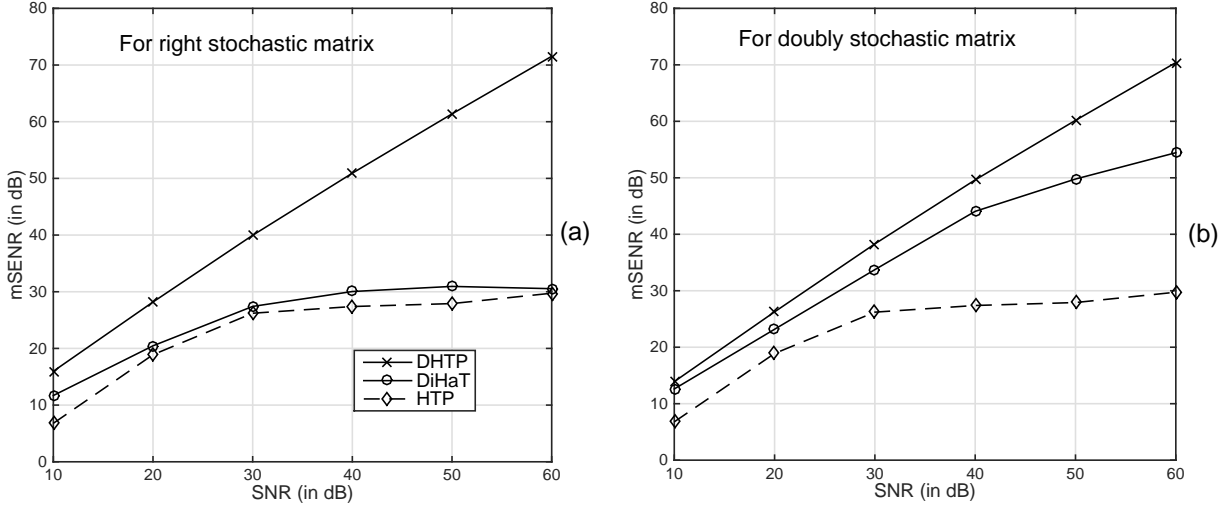}
\caption{mSENR performance of DHTP, DiHaT and HTP algorithms with respect to SNR. (a) Performance for a right stochastic network matrix. (b) Performance for a doubly stochastic network matrix.}
\label{fig:noise_robustness_plot_sm}
\end{center}
\vspace{-10mm}
\end{figure*}

\begin{figure}
\begin{center}
\includegraphics[scale=0.55]{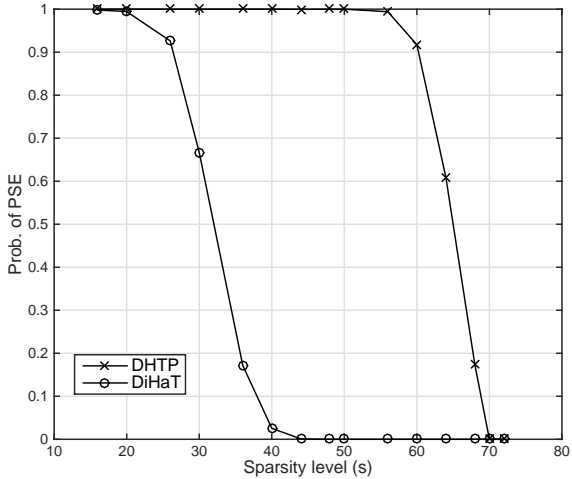}
\caption{Probability of perfect support-set estimation (PSE) versus sparsity level. No noise condition and we used a doubly stochastic network matrix.}
\label{fig:probability_of_PSE}
\end{center}
\vspace{-5mm}
\end{figure}


In this section, we study the practical performance of DHTP using simulations and compare with DiHaT  (Algorithm~\ref{algo:DiHaT}). We perform the study using Monte-Carlo simulations over many instances of $\mbA_l$, $\mbx$, and $\mbe_l$ in the system model~\eqref{eq:system_model}. The non-zero scalars of sparse signal are i.i.d. Gaussian. This is referred to as a Gaussian sparse signal. We set the number of nodes $L=20$, and every node in the network is randomly connected to three other nodes apart from itself. The stopping criterion for the algorithms is that the maximum allowable number of iterations equal to 30. We used both right stochastic and doubly stochastic $\mbH$ in simulations. Given an edge matrix of network connection between nodes, a right stochastic matrix generation is a simple task. The doubly stochastic matrix is generated through the \textit{second largest eigenvalue modulus} (SLEM) optimization problem \cite{Boyd_fastest_mixing_markov_SIAM_2004}. Finally, we also show performance for real image data.

\subsection{Performance measures}
For performance evaluation, we used a mean signal-to-estimation-noise ratio (mSENR) metric, $\text{mSENR} = \frac{1}{L} \sum_{l} \frac{\mathbb{E}{\{ \|\mbx\|^2 \}}}{\mathbb{E}{\{\|\mbx-\hat{\mbx}_l\|^2\}}}$. To generate noisy observations, we used Gaussian noise. The signal-to-noise ratio (SNR), $\text{SNR} = \text{SNR}_l = \frac{\mathbb{E}\{\|\mbx\|^2\}}{\mathbb{E}\{\|\mbe_l\|^2\}}$ is considered to be the same across all nodes.

\subsection{Experiments using Simulated Data}

We use all $\mbA_l$ that have same row size, that is, $\forall l, M_l = M$. Same row size is necessary to use DiHaT for comparison. For the experiments, we set $M=100$, and signal dimension $N=500$. In our first experiment, we compare DHTP, DiHaT and HTP for right stochastic and doubly stochastic $\mbH$. The $\mbH$ matrices are shown in the appendix.  We set sparsity level $s=20$. The results are shown in Fig.~\ref{fig:noise_robustness_plot_sm} where we show mSENR versus SNR. We recall that DiHaT was not designed for right stochastic $\mbH$ and HTP is a standalone algorithm that does not use the network. For right stochastic $\mbH$, we observe from Fig.~\ref{fig:noise_robustness_plot_sm} (a) that DiHaT does not provide considerable gain over HTP, but DHTP does. On the other hand, for doubly stochastic $\mbH$, we observe from Fig.~\ref{fig:noise_robustness_plot_sm} (b) that DiHaT provides a considerable gain over HTP, but DHTP outperforms DiHaT. The experiment validates that DHTP works for right stochastic $\mbH$. We did experiments with many instances of $\mbH$, and noted similar trend in performance. 

Next, we study the probability of perfect signal estimation under a no-noise condition. Under this condition, the probability of perfect signal estimation is equivalent to the probability of perfect support-set estimation (PSE) at all nodes. Keeping $M=100$ and $N=500$ fixed, we vary the value of $s$ and compute the probability of PSE using the frequentist approach -- how many times PSE occurred. We used the same doubly stochastic $\mbH$ of the first experiment. The result is shown in Fig.~\ref{fig:probability_of_PSE}. It can be seen that the DHTP outperforms DiHaT in the sense of phase transition from perfect to imperfect estimation.


In the third experiment, we observe convergence speed of algorithms. A fast convergence leads to less usage of communication and computational resources, and less time delay in learning. We set $s=20$. The results are shown in Fig.~\ref{fig:iter_num_plot} where we show mSENR versus the number of iterations, for the noiseless condition and 30 dB SNR. We note that the DHTP has a significantly quicker convergence. In our experiments, the DHTP achieved convergence typically within five iterations.

\begin{figure}
\begin{center}
\includegraphics[scale=0.5]{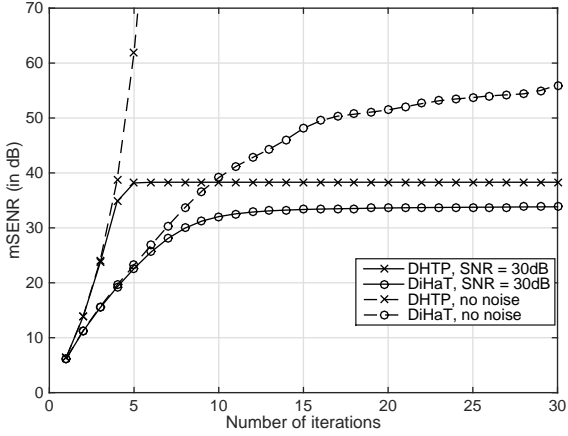}
\caption{mSENR performance of DHTP and DiHaT with respect to number of information exchange (number of iterations).}
\label{fig:iter_num_plot}
\end{center}
\end{figure}

\begin{figure}
\begin{center}
\hspace{-9mm}
\includegraphics[scale=0.5]{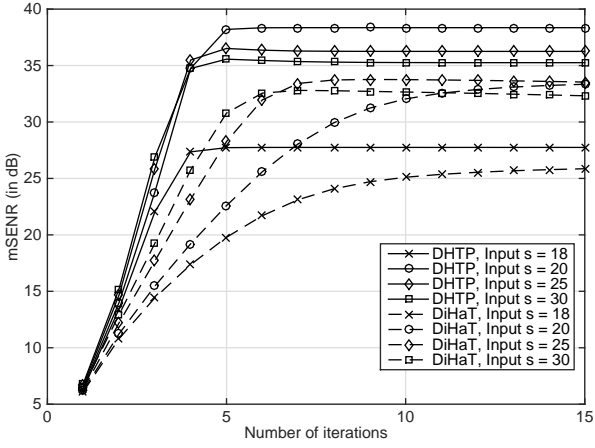}
\caption{Sensitivity performance of DHTP and DiHaT with respect to knowledge of the sparsity level. M = 100, N = 500, s = 20, L = 20, d = 4, SNR = 30dB.}
\label{fig:s_sensitivity_plot}
\end{center}
\vspace{-5mm}
\end{figure}

Finally, we experiment to find the sensitivity of the DHTP and the DiHaT algorithms to the prior knowledge of sparsity level. For this, we use 30 dB SNR and $s=20$. Fig.~\ref{fig:s_sensitivity_plot} shows the results for different assumed $s$ that varies as $s=18, 20, 25$, and $30$. We observe that DHTP performs better than DiHaT for all assumed sparsity levels. Also, a typical trend is that the assumption of higher sparsity level is always better than lower sparsity level.

\subsection{Experiments for real data}
\begin{figure*}
\begin{center}
\begin{tabular}{ccc}
\hspace{-8mm} \includegraphics[scale=0.25]{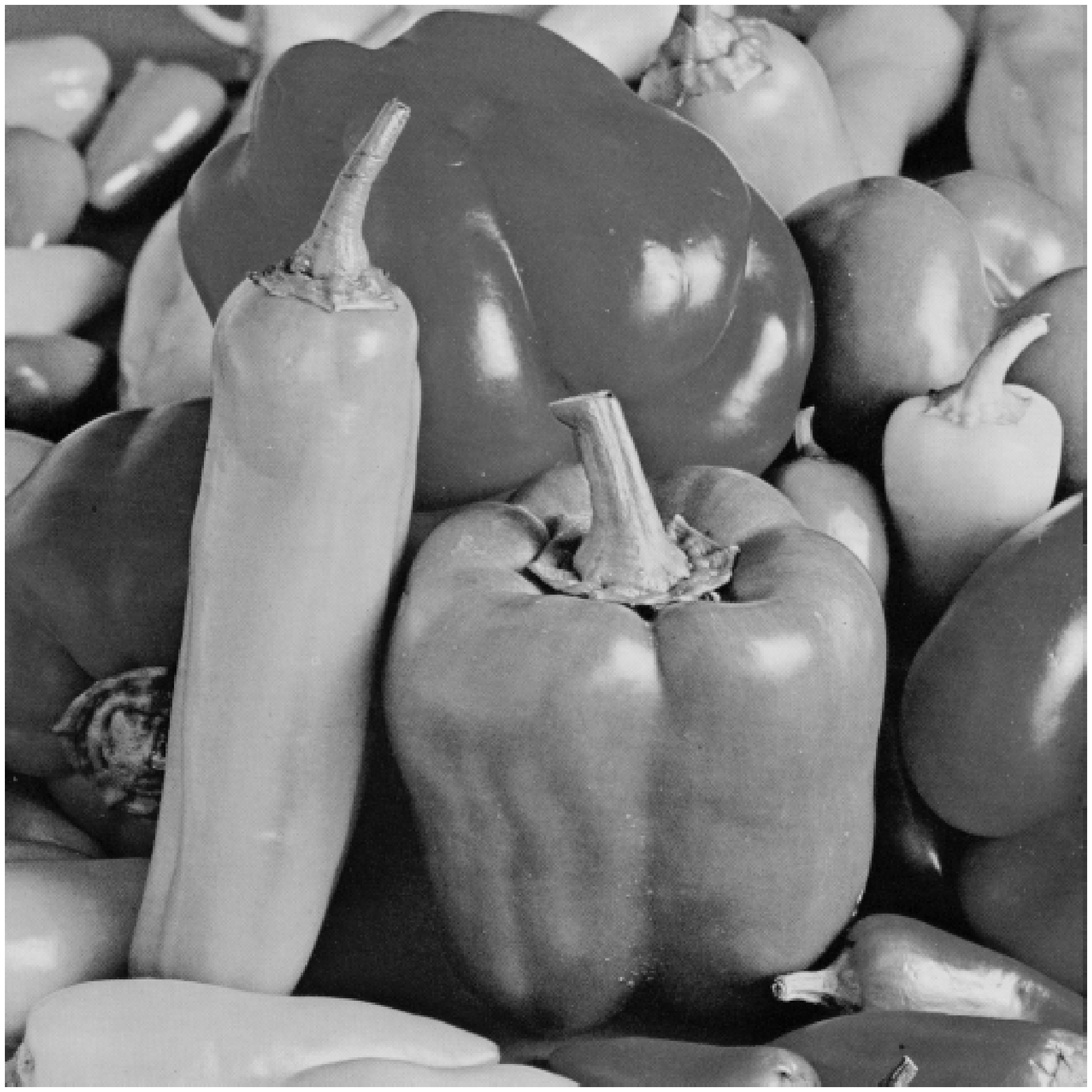} & \hspace{-15mm} \includegraphics[scale=0.25]{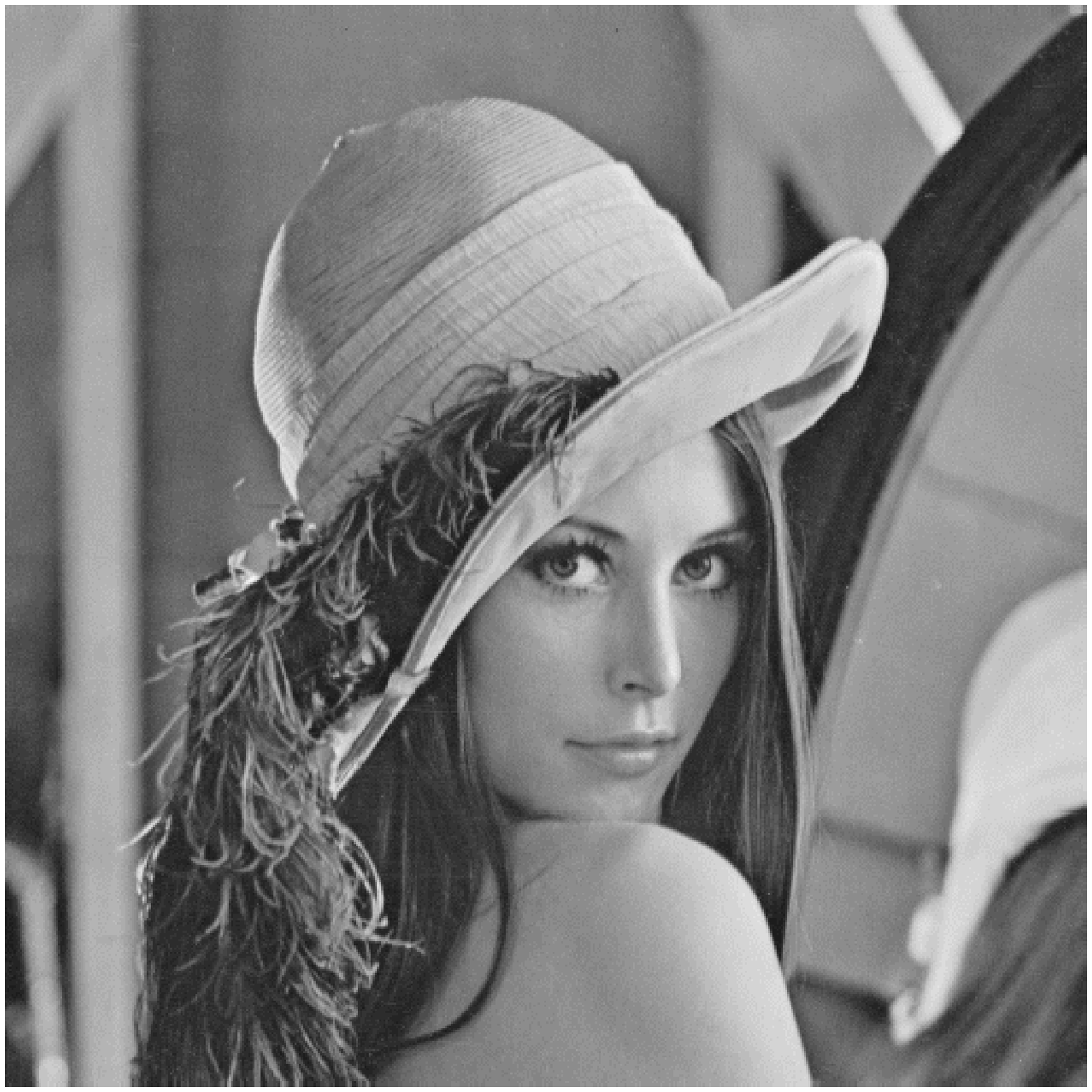} & \hspace{-15mm} \includegraphics[scale=0.25]{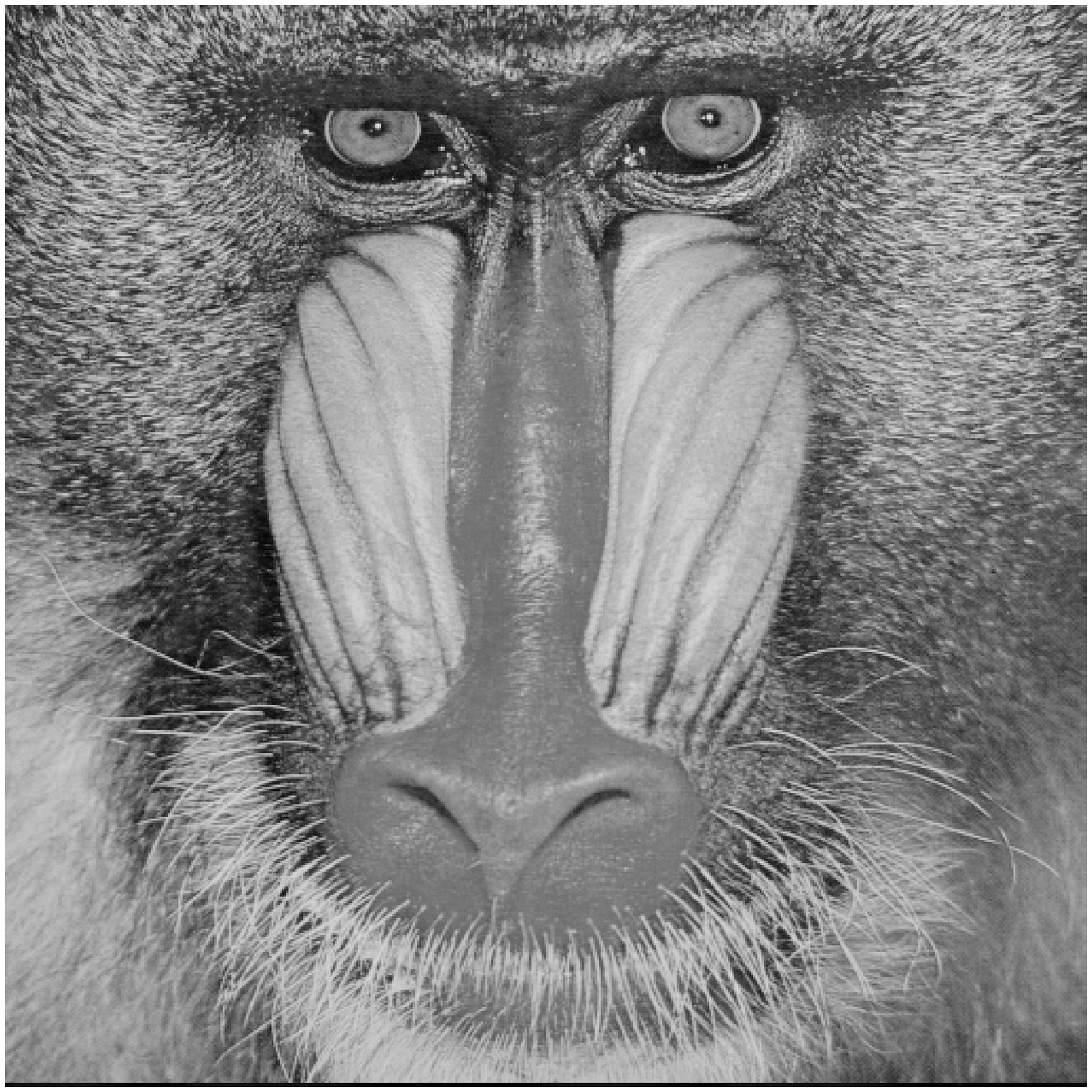} \\
\hspace{-8mm} \includegraphics[scale=0.25]{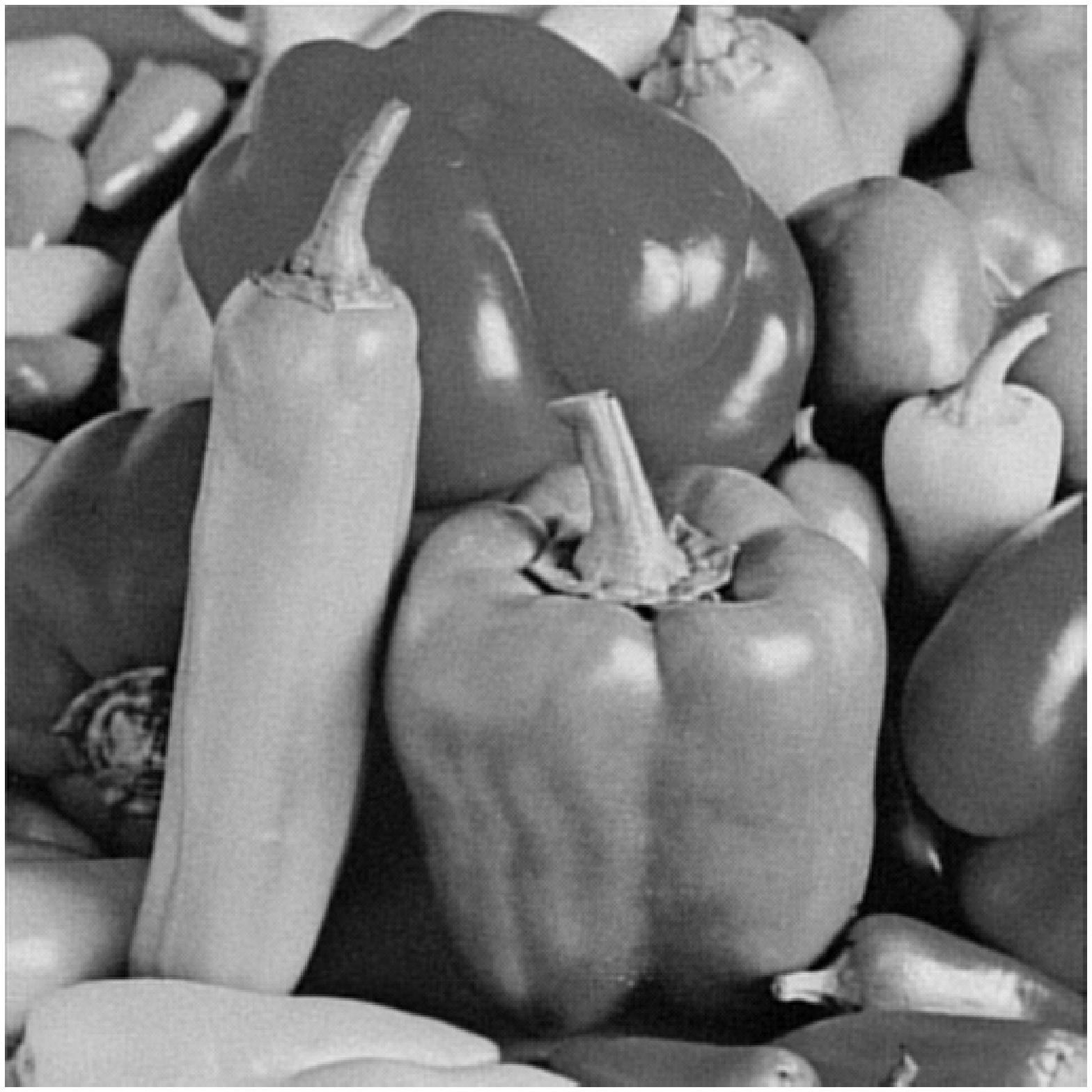} & \hspace{-15mm} \includegraphics[scale=0.25]{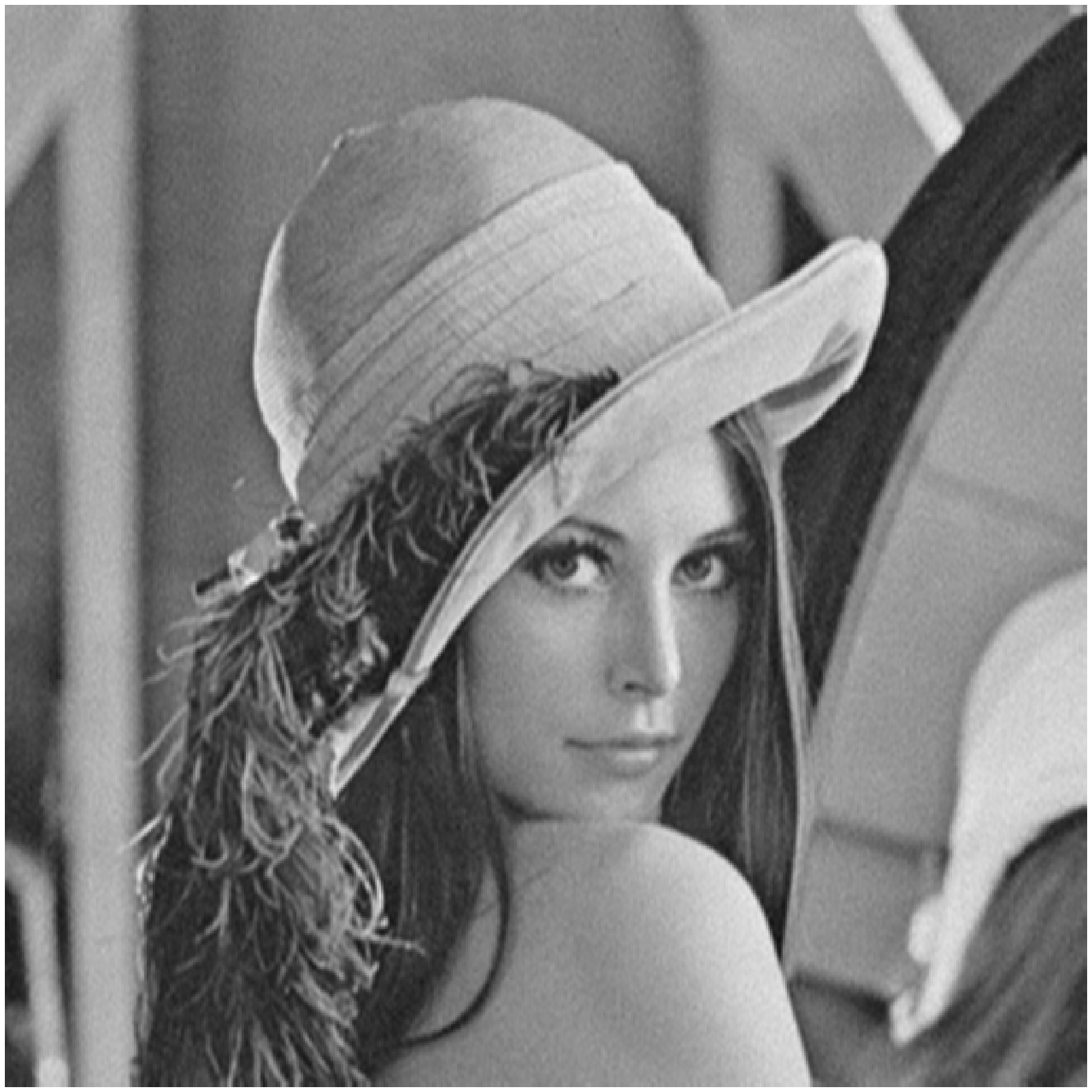} & \hspace{-15mm} \includegraphics[scale=0.25]{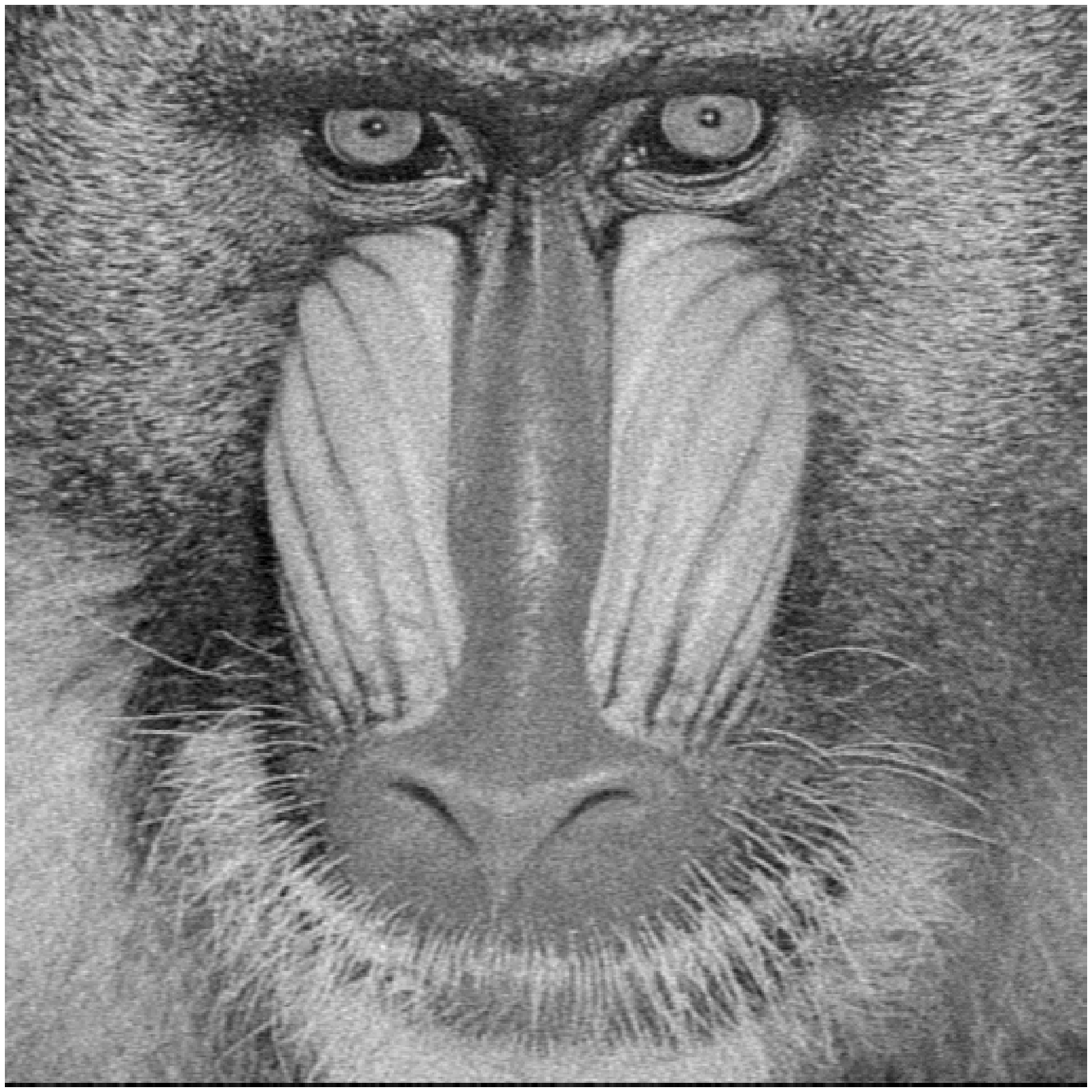} \\
\hspace{-12mm} \includegraphics[scale=0.32]{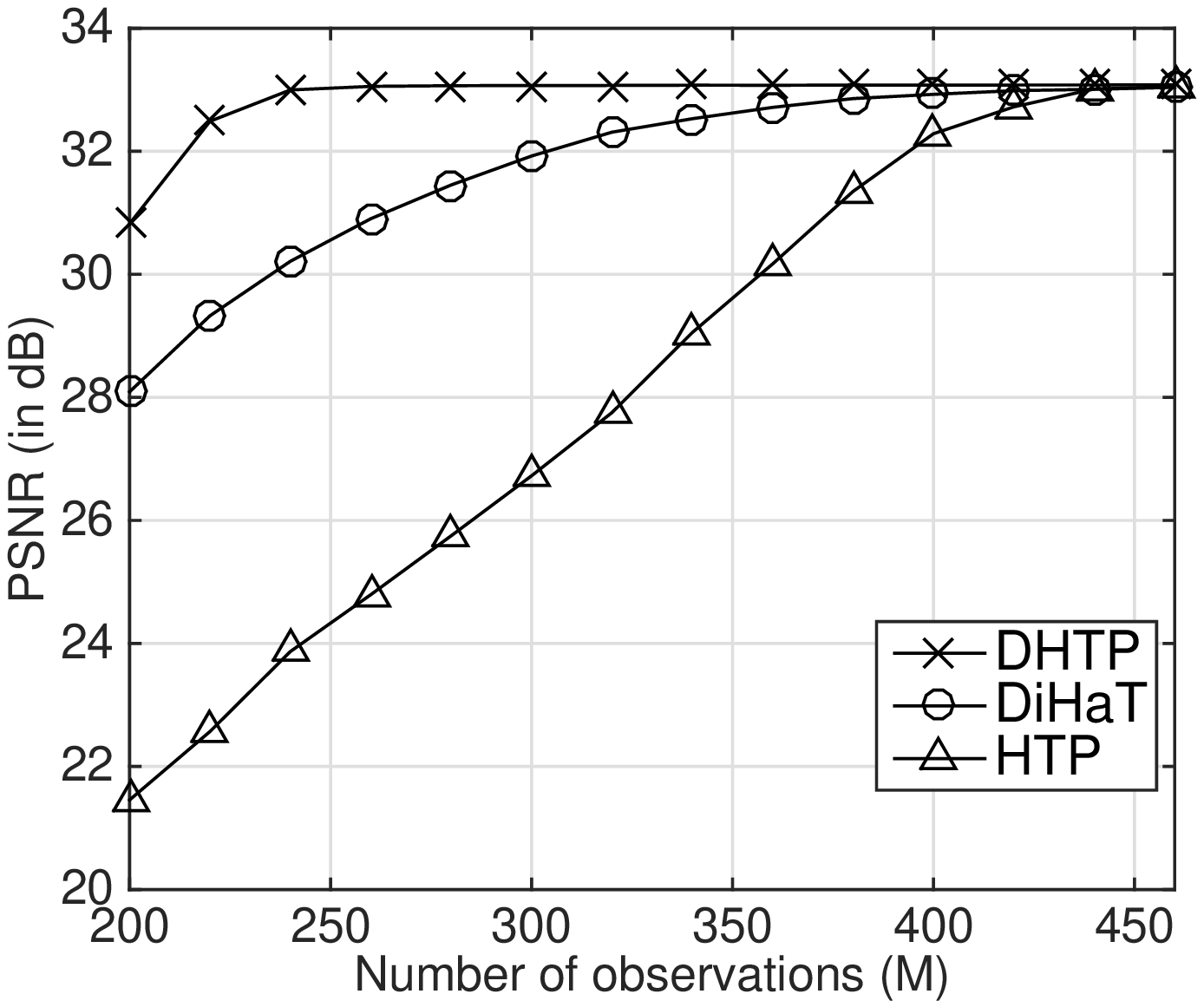} & \hspace{-18mm} \includegraphics[scale=0.32]{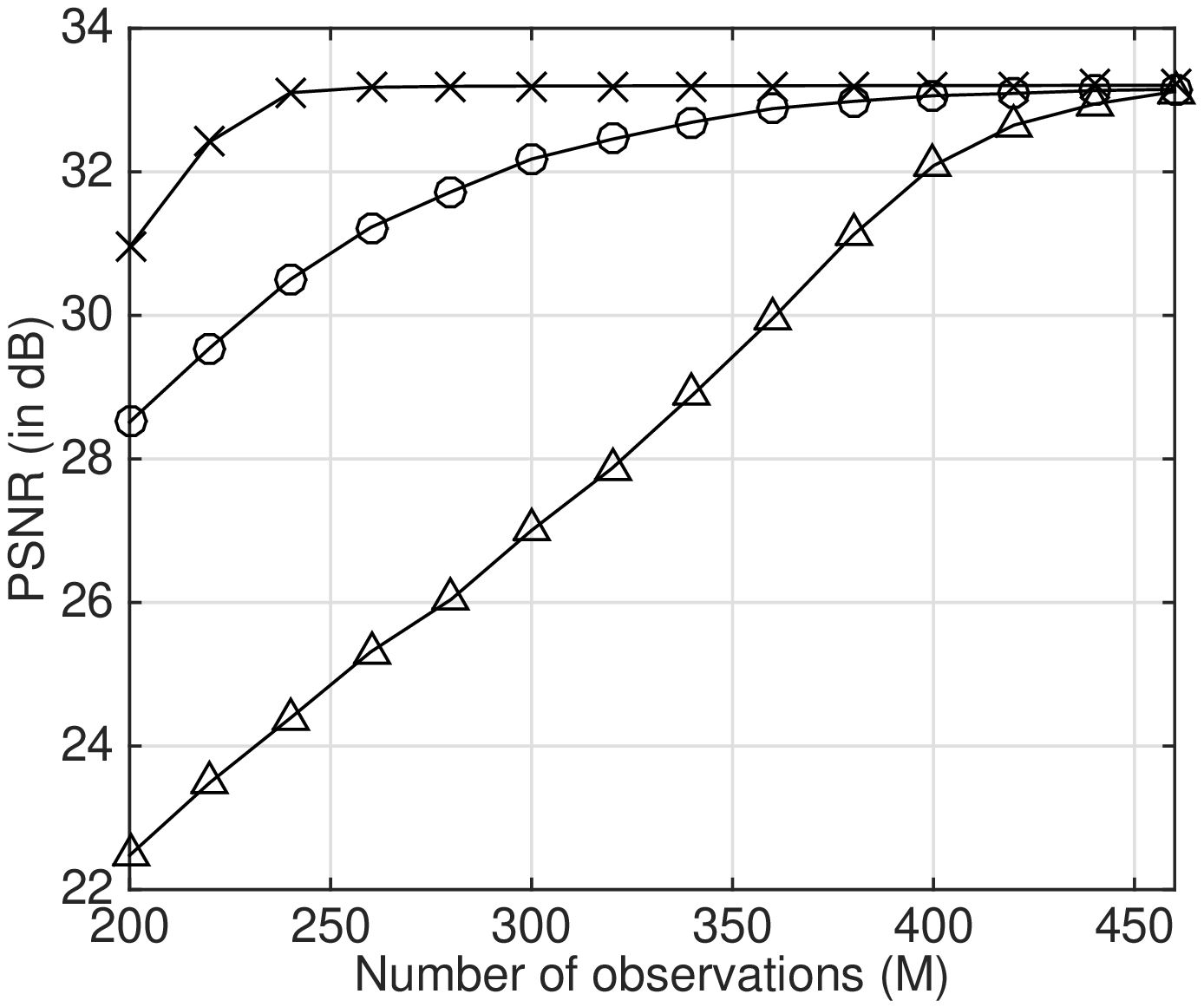} & \hspace{-15mm} \includegraphics[scale=0.32]{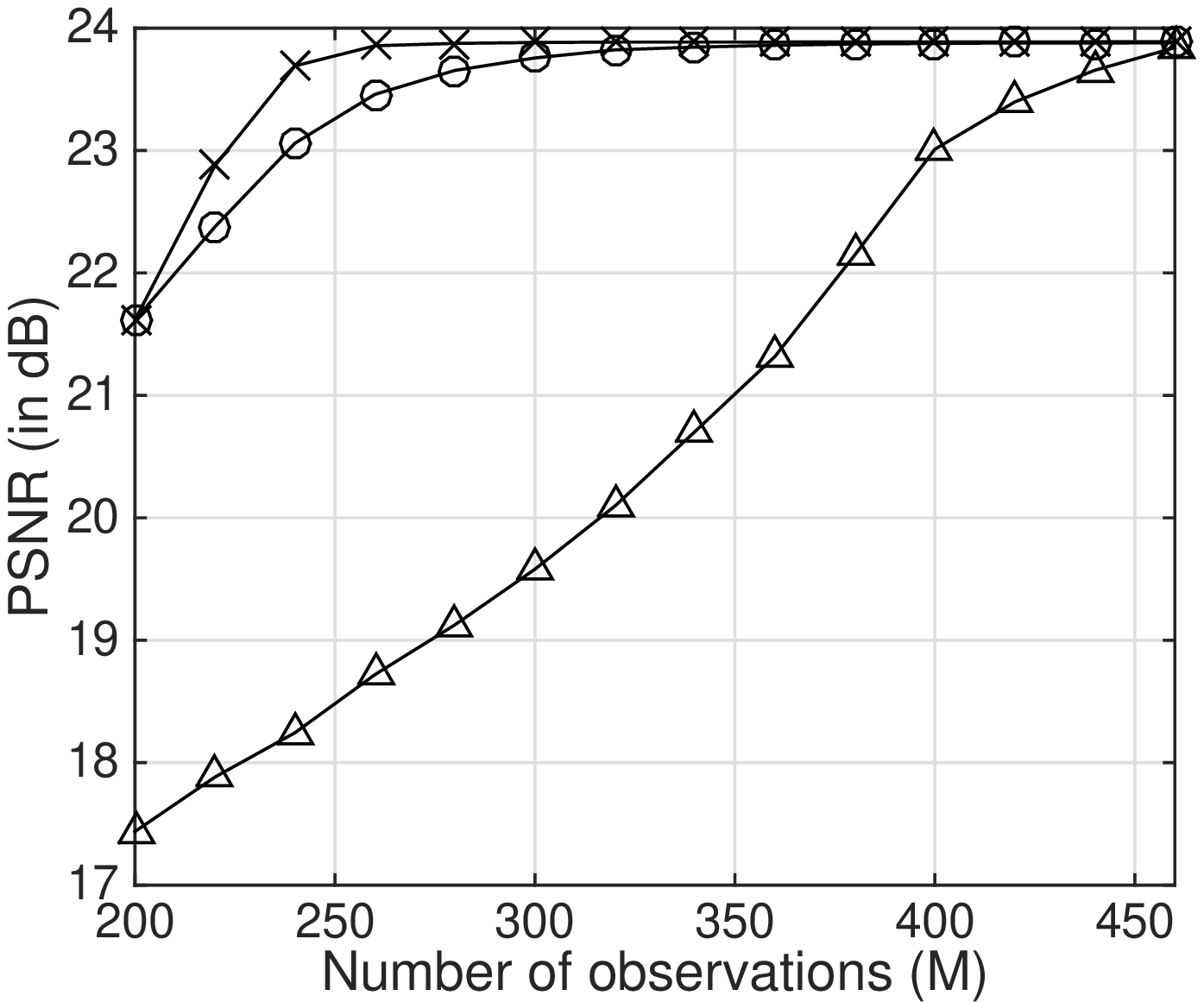}
\end{tabular}
\caption{Performance comparison of DHTP, DiHaT and HTP over image data. The first column, second column and the third column of the figure correspond to DHTP, DiHaT and HTP, respectively. The top row of the figure contains the original images. The second row contains the reconstructed images using DHTP with sparsity level $11\%$. The last row shows the PSNR performance of algorithms with respect to varying number of observations ($M$).}
\label{fig:image_multiple_fig_dhtp}
\end{center}
\end{figure*}

We evaluate the performance on three standard grayscale images: Peppers, Lena and Baboon of size $512\times512$ pixels. We consider $11\%$ of highest magnitude DCT coefficients of an image to decide a sparsity level choice. In DCT domain, the signal is split into $256$ equal parts (or blocks) for ease of computation. This leads to the value of $s$ for each part as close to $120$. We perform reconstruction of the original images using the DHTP, DiHaT and HTP algorithms over the doubly stochastic network matrix $\mbH$ chosen in the previous subsection. The performance measure is the peak-signal-to-noise-ratio (PSNR), defined as $\text{PSNR} =  \frac{\|\mbx\|_{\infty}^2}{\|\mbx-\hat{\mbx}\|^2}$,
where $\|.\|_{\infty}$ denotes the $\ell_{\infty}$ norm. We show performance for a randomly chosen node among the set of 20 nodes. Fig. \ref{fig:image_multiple_fig_dhtp} shows a plot of the PSNR versus number of observations at each node ($M$). In the same figure, we also show visual reconstruction quality (reconstructed image) at $M=240$ for DHTP. We observe that DHTP has a better convergence rate and PSNR performance than the other two algorithms. 

\subsection{Reproducible research}
In the spirit of reproducible research, we provide relevant Matlab codes at www.ee.kth.se/reproducible/ and the link https://sites.google.com/site/saikatchatt/softwares. The code produces the results shown in the figures.

\section{Conclusion}
\label{sec:conclusion}
For sparse learning over a network using distributed greedy algorithms such as the hard thresholding approach, we show that the strategy of exchanging signal estimates between nodes is good for learning. This has an explicit advantage of low communication overhead. We show that appropriate algorithmic strategies work for right stochastic network matrices. Use of right stochastic network matrices has higher generality than the popularly used doubly stochastic network matrices.

\section{Details of Theoretical Proofs}
\label{sec:DHTP_proofs}

\subsection{Useful Lemmas}
We provide three lemmas here that will be used in the proofs later. The first lemma provides a bound for the orthogonal projection operation used in DHTP.
\begin{lemma}{\cite[Lemma 2]{Liu_improved_SP_analysis_SPL_2014}}
\label{lem:zf_bound_inequality}
Consider the standard sparse representation model $\mby = \mbA\mbx + \mbe$ with $\|\mbx\|_{0} = s_1$. Let $\mathcal{S} \subseteq \{1,2,\hdots,N\}$ and $|\mathcal{S}| = s_2$. Define $\bar{\mbx}$ such that
$\bar{\mbx}_{\mathcal{S}} \gets \mbA^{\dag}_{\mathcal{S}} \mby \ ; \,\ \bar{\mbx}_{\mathcal{S}^{c}} \gets \mathbf{0}$. If $\mbA$ has RIC $\delta_{s_1+s_2} < 1$, then we have
\begin{eqnarray*}
\|\mbx-\bar{\mbx}\| \leq \sqrt{\frac{1}{1-\delta_{s_1+s_2}^2}}\|\mbx_{\mathcal{S}^{c}}\| +\frac{ \sqrt{1+\delta_{s_2}}}{1-\delta_{s_!+s_2}} \|\mbe\|.
\end{eqnarray*}
\end{lemma}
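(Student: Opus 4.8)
The plan is to control the error vector $\mathbf{u} \triangleq \mbx - \bar{\mbx}$ block by block, separating the contribution of the off-support energy $\mbx_{\mathcal{S}^c}$ from that of the noise $\mbe$. Substituting $\mby = \mbA_{\mathcal{S}}\mbx_{\mathcal{S}} + \mbA_{\mathcal{S}^c}\mbx_{\mathcal{S}^c} + \mbe$ into $\bar{\mbx}_{\mathcal{S}} = \mbA^{\dag}_{\mathcal{S}}\mby$ and using $\mbA^{\dag}_{\mathcal{S}}\mbA_{\mathcal{S}} = I$, I would first obtain $\mathbf{u}_{\mathcal{S}} = -\mbA^{\dag}_{\mathcal{S}}\mbA_{\mathcal{S}^c}\mbx_{\mathcal{S}^c} - \mbA^{\dag}_{\mathcal{S}}\mbe$ and $\mathbf{u}_{\mathcal{S}^c} = \mbx_{\mathcal{S}^c}$. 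By linearity I split $\mathbf{u} = \mathbf{p} + \mathbf{q}$, where the leakage vector carries the $\mbx_{\mathcal{S}^c}$ terms, $\mathbf{p}_{\mathcal{S}} = -\mbA^{\dag}_{\mathcal{S}}\mbA_{\mathcal{S}^c}\mbx_{\mathcal{S}^c}$ and $\mathbf{p}_{\mathcal{S}^c} = \mbx_{\mathcal{S}^c}$, while the noise vector carries the rest, $\mathbf{q}_{\mathcal{S}} = -\mbA^{\dag}_{\mathcal{S}}\mbe$ and $\mathbf{q}_{\mathcal{S}^c} = \mathbf{0}$. The triangle inequality $\|\mbx - \bar{\mbx}\| \le \|\mathbf{p}\| + \|\mathbf{q}\|$ then splits the target into the two summands of the claimed bound.

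The noise part is routine. Since $\mathbf{q}$ is supported on $\mathcal{S}$, we have $\|\mathbf{q}\| = \|\mbA^{\dag}_{\mathcal{S}}\mbe\| \le \|(\mbA^{\top}_{\mathcal{S}}\mbA_{\mathcal{S}})^{-1}\|\,\|\mbA^{\top}_{\mathcal{S}}\mbe\|$; the RIP gives $\|(\mbA^{\top}_{\mathcal{S}}\mbA_{\mathcal{S}})^{-1}\| \le (1-\delta_{s_2})^{-1}$ and $\|\mbA^{\top}_{\mathcal{S}}\mbe\| \le \sqrt{1+\delta_{s_2}}\,\|\mbe\|$, and using monotonicity of the RIC ($\delta_{s_2} \le \delta_{s_1+s_2}$) in the denominator yields $\|\mathbf{q}\| \le \frac{\sqrt{1+\delta_{s_2}}}{1-\delta_{s_1+s_2}}\|\mbe\|$, which is the second term of the bound.

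The heart of the argument is the leakage part, where the sharp coefficient $1/\sqrt{1-\delta^2_{s_1+s_2}}$ must appear. Here I would exploit that $\mbA_{\mathcal{S}}\mathbf{p}_{\mathcal{S}}$ is the orthogonal projection of $\mbA_{\mathcal{S}^c}\mbx_{\mathcal{S}^c}$ onto the range of $\mbA_{\mathcal{S}}$, so that the residual $\mbA\mathbf{p} = (I - \mbA_{\mathcal{S}}\mbA^{\dag}_{\mathcal{S}})\mbA_{\mathcal{S}^c}\mbx_{\mathcal{S}^c}$ satisfies the normal-equation orthogonality $\mbA^{\top}_{\mathcal{S}}\mbA\mathbf{p} = \mathbf{0}$. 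Setting $W \triangleq \mathcal{S} \cup \supp(\mbx_{\mathcal{S}^c})$, which has size at most $s_1+s_2$ and contains the support of $\mathbf{p}$, I would write the restricted Gram matrix as $\mbA^{\top}_{W}\mbA_{W} = I + E$ with $\|E\| \le \delta_{s_1+s_2}$ by the RIP. The orthogonality condition is precisely the statement that the $\mathcal{S}$-block of $(I+E)\mathbf{p}$ vanishes, i.e. $\mathbf{p}_{\mathcal{S}} = -(E\mathbf{p})_{\mathcal{S}}$, whence $\|\mathbf{p}_{\mathcal{S}}\| \le \|E\mathbf{p}\| \le \delta_{s_1+s_2}\|\mathbf{p}\|$. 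Inserting this into the Pythagorean identity $\|\mathbf{p}\|^2 = \|\mathbf{p}_{\mathcal{S}}\|^2 + \|\mbx_{\mathcal{S}^c}\|^2$ gives $(1-\delta^2_{s_1+s_2})\|\mathbf{p}\|^2 \le \|\mbx_{\mathcal{S}^c}\|^2$, that is $\|\mathbf{p}\| \le \|\mbx_{\mathcal{S}^c}\|/\sqrt{1-\delta^2_{s_1+s_2}}$; adding the two pieces finishes the proof.

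I expect the one genuinely delicate step to be obtaining $\|\mathbf{p}_{\mathcal{S}}\| \le \delta_{s_1+s_2}\|\mathbf{p}\|$ with the correct constant. The naive estimate $\|\mbA^{\dag}_{\mathcal{S}}\mbA_{\mathcal{S}^c}\mbx_{\mathcal{S}^c}\| \le \|(\mbA^{\top}_{\mathcal{S}}\mbA_{\mathcal{S}})^{-1}\|\,\|\mbA^{\top}_{\mathcal{S}}\mbA_{\mathcal{S}^c}\mbx_{\mathcal{S}^c}\| \le \frac{\delta_{s_1+s_2}}{1-\delta_{s_2}}\|\mbx_{\mathcal{S}^c}\|$ is too weak to reproduce the $\sqrt{1-\delta^2_{s_1+s_2}}$ denominator; the improvement comes exactly from reading the normal-equation orthogonality off the $I+E$ decomposition of the restricted Gram matrix rather than splitting the pseudo-inverse into two operator norms.
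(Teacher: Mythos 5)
Your proposal is correct, and there is in fact no in-paper proof to compare it against: the paper imports this lemma verbatim by citation from \cite[Lemma 2]{Liu_improved_SP_analysis_SPL_2014} (the statement's $\delta_{s_!+s_2}$ is merely a typo for $\delta_{s_1+s_2}$), and uses it as a black box in the proof of Theorem~\ref{thm:est_iter_result_DHTP1}. Your argument — splitting $\mbx-\bar{\mbx}$ into a leakage part $\mathbf{p}$ and a noise part $\mathbf{q}$, bounding $\mathbf{q}$ by the pseudo-inverse operator norms, and obtaining the sharp constant for $\mathbf{p}$ from the normal-equation orthogonality $\mbA_{\mathcal{S}}^{\top}\mbA\mathbf{p}=\mathbf{0}$ read through the $\mbA_W^{\top}\mbA_W = I+E$ decomposition, followed by the Pythagorean identity $\|\mathbf{p}\|^2 = \|\mathbf{p}_{\mathcal{S}}\|^2 + \|\mbx_{\mathcal{S}^c}\|^2$ — is sound at every step and is essentially the standard derivation used in the cited reference; in particular you correctly identify that the naive two-operator-norm bound would only give $\delta_{s_1+s_2}/(1-\delta_{s_2})$ and cannot produce the $1/\sqrt{1-\delta_{s_1+s_2}^2}$ coefficient.
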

The next lemma gives a useful inequality on squares of polynomials commonly encountered in the proofs.
\begin{lemma}{\cite[Lemma 1]{Liu_improved_SP_analysis_SPL_2014}}
\label{lem:squares_bound_inequality}
For non-negative numbers $a,b,c,d,x,y,$
\begin{eqnarray*}
(ax+by)^2 + (cx+dy)^2 \leq \left( \sqrt{a^2+c^2}x + (b+d)y\right)^2.
\end{eqnarray*}
\end{lemma}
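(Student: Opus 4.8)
The plan is to reduce the stated inequality to a pair of scalar coefficient comparisons by treating both sides as quadratic forms in the monomials $x^2$, $xy$, and $y^2$, and then to exploit the non-negativity of all six variables. First I would expand the left-hand side as
\begin{equation*}
(ax+by)^2 + (cx+dy)^2 = (a^2+c^2)x^2 + 2(ab+cd)\,xy + (b^2+d^2)y^2,
\end{equation*}
and the right-hand side as
\begin{equation*}
\left( \sqrt{a^2+c^2}\,x + (b+d)y\right)^2 = (a^2+c^2)x^2 + 2\sqrt{a^2+c^2}\,(b+d)\,xy + (b+d)^2 y^2.
\end{equation*}
The coefficients of $x^2$ coincide on both sides, so they cancel, and it suffices to verify the inequality coefficient-by-coefficient for the $xy$ and $y^2$ terms.

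For the $y^2$ coefficient the comparison is immediate: since $(b+d)^2 - (b^2+d^2) = 2bd \geq 0$ for non-negative $b,d$, we have $b^2+d^2 \leq (b+d)^2$. For the cross term, the key observation is that because $a,c \geq 0$ each of them is dominated by the Euclidean combination, namely $a \leq \sqrt{a^2+c^2}$ and $c \leq \sqrt{a^2+c^2}$. Multiplying the first by $b \geq 0$ and the second by $d \geq 0$ and adding gives $ab + cd \leq (b+d)\sqrt{a^2+c^2}$, which is exactly the comparison needed for the $xy$ coefficient.

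Finally I would assemble these two scalar inequalities into the full claim. Because $x,y \geq 0$, both monomials $xy$ and $y^2$ are non-negative, so multiplying each coefficient inequality by its associated monomial preserves the inequality direction; summing the results (and adding back the identical $x^2$ terms) yields the stated bound. The main obstacle, such as it is, lies only in the cross term $ab+cd$, and the clean way around it is the elementary domination $a,c \leq \sqrt{a^2+c^2}$ rather than a Cauchy–Schwarz argument. The essential hypothesis being used throughout is the non-negativity of all six quantities, which is needed both to obtain $\sqrt{b^2+d^2}\leq b+d$-type bounds and, more importantly, to guarantee that multiplying by $xy$ and $y^2$ does not flip any inequality.
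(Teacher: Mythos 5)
Your proof is correct in every step: the expansions of both sides are right, the $x^2$ coefficients cancel, the $y^2$ comparison follows from $2bd \geq 0$, the cross-term comparison follows from $ab + cd \leq b\sqrt{a^2+c^2} + d\sqrt{a^2+c^2}$, and the non-negativity of $x,y$ is exactly what licenses multiplying the coefficient inequalities by $xy$ and $y^2$ and summing. There is, however, nothing in the paper to compare this against: the lemma is imported verbatim, with citation, from Lemma 1 of the referenced work on improved subspace pursuit analysis, and the authors use it as a black box (in the proof of Theorem 3, to merge two squared bounds into a single bound on $\left\|\mathbf{x}_{\hat{\mathcal{T}}_{l,k}^c}\right\|$). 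So your argument is a self-contained proof where the paper supplies none. It is also worth noting that the most common route for the cross term would be Cauchy--Schwarz, $ab+cd \leq \sqrt{a^2+c^2}\,\sqrt{b^2+d^2}$, followed by $\sqrt{b^2+d^2} \leq b+d$; your pointwise domination $a \leq \sqrt{a^2+c^2}$, $c \leq \sqrt{a^2+c^2}$ is even more elementary and reaches the same bound directly. One cosmetic slip in your closing remarks: you say non-negativity is needed to obtain ``$\sqrt{b^2+d^2}\leq b+d$-type bounds,'' but your argument never actually invokes that inequality --- it uses $(b+d)^2-(b^2+d^2)=2bd\geq 0$ and the cross-term domination instead; the statement is harmless but should be phrased to match what you actually proved.
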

The last lemma provides a bound for the energy content in the pruned indices.
\begin{lemma}{\cite[Lemma 3]{Zaki_Venkitaraman_Chatterjee_Rasmussen_GreedySparseLearningOverNetwork_TSIPN_2017}}{\label{lem:smaller_indices_bound}}
Consider two vectors $\mbx$ and ${\mathbf{z}}$ with $\|\mbx\|_{0} = s_1$, $\|{\mathbf{z}}\|_{0} = s_2$ and $s_2 \geq s_1$. We have $\mathcal{S}_1 \triangleq \supp(\mbx, s_1)$ and $\mathcal{S}_2 \triangleq \supp({\mathbf{z}},s_2)$. Let $\mathcal{S}_{\nabla}$ denote the set of indices of the $s_2-s_1$ smallest magnitude elements in ${\mathbf{z}}$. Then,
\begin{eqnarray*}
\|\mbx_{\mathcal{S}_{\nabla}}\|  \leq \sqrt{2} \|(\mbx-{\mathbf{z}})_{\mathcal{S}_2}\| \leq \sqrt{2} \|\mbx-{\mathbf{z}}\|.
\end{eqnarray*}
\end{lemma}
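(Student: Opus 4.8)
The plan is to push everything onto the support of $\mbx$, exploit the defining ``smallest magnitude'' property of $\mathcal{S}_\nabla$ through a counting-and-injection argument, and recover the factor $\sqrt{2}$ only at the very end via the elementary bound $u+v \le \sqrt{2(u^2+v^2)}$. Throughout I read ``the $s_2-s_1$ smallest magnitude elements in $\mathbf{z}$'' as indices lying inside the support $\mathcal{S}_2$ of $\mathbf{z}$, so that $\mathcal{S}_\nabla \subseteq \mathcal{S}_2$ and its complementary part $\mathcal{S}_\Delta \triangleq \mathcal{S}_2 \setminus \mathcal{S}_\nabla$ collects the $s_1$ largest-magnitude entries of $\mathbf{z}$.

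First I would note that $\mbx$ vanishes outside $\mathcal{S}_1$, so with $A \triangleq \mathcal{S}_1 \cap \mathcal{S}_\nabla$ one has $\|\mbx_{\mathcal{S}_\nabla}\| = \|\mbx_A\|$; the task is therefore to control $\mbx$ only on those indices that lie simultaneously in the support of $\mbx$ and in the ``small'' part of $\mathbf{z}$. Writing $\mbx_A = (\mbx-\mathbf{z})_A + \mathbf{z}_A$ and applying the triangle inequality gives $\|\mbx_A\| \le \|(\mbx-\mathbf{z})_A\| + \|\mathbf{z}_A\|$, so the remaining work is to dominate $\|\mathbf{z}_A\|$ by the error $\mbx-\mathbf{z}$ measured on a set disjoint from $A$.

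The crux of the argument is a transfer of energy from $\mathbf{z}_A$ onto $\mathcal{S}_\Delta$. Because $\mathcal{S}_\nabla$ and $\mathcal{S}_\Delta$ partition $\mathcal{S}_2$, a short count gives $|A| + |\mathcal{S}_1\cap\mathcal{S}_\Delta| = |\mathcal{S}_1\cap\mathcal{S}_2| \le s_1 = |\mathcal{S}_\Delta|$, hence $|A| \le |\mathcal{S}_\Delta \setminus \mathcal{S}_1|$. Since every entry of $\mathbf{z}$ on $\mathcal{S}_\nabla$ is no larger in magnitude than any entry on $\mathcal{S}_\Delta$, I can fix an injection of $A$ into $\mathcal{S}_\Delta \setminus \mathcal{S}_1$ that does not increase magnitudes, which yields $\|\mathbf{z}_A\|^2 \le \sum_{j \in \mathcal{S}_\Delta \setminus \mathcal{S}_1} z_j^2$. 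On $\mathcal{S}_\Delta \setminus \mathcal{S}_1$ the vector $\mbx$ is zero, so $z_j^2 = (x_j-z_j)^2$ there, and therefore $\|\mathbf{z}_A\| \le \|(\mbx-\mathbf{z})_{\mathcal{S}_\Delta}\|$.

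Combining the two bounds, and using that $A$ and $\mathcal{S}_\Delta$ are disjoint subsets of $\mathcal{S}_2$, I obtain $\|\mbx_A\| \le \|(\mbx-\mathbf{z})_A\| + \|(\mbx-\mathbf{z})_{\mathcal{S}_\Delta}\| \le \sqrt{2}\,\big(\|(\mbx-\mathbf{z})_A\|^2 + \|(\mbx-\mathbf{z})_{\mathcal{S}_\Delta}\|^2\big)^{1/2} \le \sqrt{2}\,\|(\mbx-\mathbf{z})_{\mathcal{S}_2}\|$, which is the first inequality; the second, $\|(\mbx-\mathbf{z})_{\mathcal{S}_2}\| \le \|\mbx-\mathbf{z}\|$, is immediate since restricting to a subset of coordinates only decreases the $\ell_2$ norm. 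The step I expect to be the real obstacle is the counting-and-injection argument: one must pin down the cardinality inequality $|A| \le |\mathcal{S}_\Delta \setminus \mathcal{S}_1|$ exactly and correctly exploit the ordering of magnitudes, since everything downstream — including the clean appearance of $\sqrt{2}$ rather than a worse constant — hinges on transferring $\mathbf{z}_A$ onto a disjoint set where $\mbx$ happens to vanish.
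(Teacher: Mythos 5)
Your proof is correct, but there is nothing in this paper to compare it against: the paper imports this lemma verbatim from an external reference (Lemma 3 of the cited TSIPN 2017 paper) and gives no proof of it, so your argument stands as a self-contained derivation. The three load-bearing steps all check out: the counting identity $|A| + |\mathcal{S}_1\cap\mathcal{S}_\Delta| = |\mathcal{S}_1\cap\mathcal{S}_2| \le s_1 = |\mathcal{S}_\Delta|$ (valid because $\mathcal{S}_\nabla$ and $\mathcal{S}_\Delta$ partition $\mathcal{S}_2$), the magnitude-preserving injection of $A$ into $\mathcal{S}_\Delta\setminus\mathcal{S}_1$ giving $\|\mathbf{z}_A\| \le \|(\mbx-\mathbf{z})_{\mathcal{S}_\Delta}\|$ since $\mbx$ vanishes there, and the final $\sqrt{2}$ obtained from $(u+v)^2 \le 2(u^2+v^2)$ together with the disjointness of $A \subseteq \mathcal{S}_\nabla$ and $\mathcal{S}_\Delta$ inside $\mathcal{S}_2$. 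One point worth emphasizing is that your reading of $\mathcal{S}_\nabla$ as a subset of $\mathcal{S}_2$ is not merely a convenience but necessary: if $\mathcal{S}_\nabla$ were allowed to pick up zero entries of $\mathbf{z}$ outside its support (e.g., $\mbx = (10,0,0)$, $\mathbf{z} = (0,1,2)$, $s_1=1$, $s_2=2$, $\mathcal{S}_\nabla = \{1\}$), the claimed inequality would fail; your interpretation is also the one consistent with how the paper invokes the lemma in the proof of Theorem~\ref{thm:num_itr_DHTP2}, where the pruned index set $\nabla\hat{\T}_{l,k}$ lies inside the union of the supports being merged.
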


\subsection{Proof of Theorem~\ref{thm:est_iter_result_DHTP1}}
At Step 2, using Lemma~\ref{lem:zf_bound_inequality}, we have
\begin{align}
\label{eq:step2_zf_bound1}
\|\mbx-\tilde{\mbx}_{l,k}\| & \leq \sqrt{\frac{1}{1-\delta_{2s}^2}}\|\mbx_{\tilde{\T}_{l,k}^c}\| + \frac{\sqrt{1+\delta_{s}}}{1-\delta_{2s}}\|\mbe_l\|, \nonumber \\
& \hspace{-5mm} \stackrel{(a)}{=} \sqrt{\frac{1}{1-\delta_{2s}^2}}\|(\mbx-\tilde{\mbx}_{l,k})_{\tilde{\T}_{l,k}^c}\| + \frac{\sqrt{1+\delta_{s}}}{1-\delta_{2s}}\|\mbe_l\|,
\end{align}
where $(a)$ follows from the construction of $\tilde{\mbx}_{l,k}$. Following the proof of \cite[Theorem 3.8]{Foucart_HTP_algorithm_SIAM_2011}, we can write,
\begin{eqnarray}
\label{eq:step1_htp_bound}
\begin{array}{l}
\hspace{-2mm} \|(\mbx-\tilde{\mbx}_{l,k})_{\tilde{\T}_{l,k}^c}\| \leq \sqrt{2}\delta_{3s}\|\mbx-\hat{\mbx}_{l,k-1}\| + \sqrt{2(1+\delta_{2s})}\|\mbe_l\|. \hspace{-4mm}
\end{array}
\end{eqnarray}
Substituting the above equation in \eqref{eq:step2_zf_bound1} , we have
\begin{eqnarray}
\label{eq:DHTP_eq_1}
\begin{array}{l}
\|\mbx-\tilde{\mbx}_{l,k}\|  \leq \sqrt{\frac{2 \delta_{3s}^2}{1-\delta_{2s}^2}} \|\mbx-\hat{\mbx}_{l,k-1}\| + \frac{d_1}{2}\|\mbe_l\|.
\end{array}
\end{eqnarray}
where $d_1 = \frac{2\sqrt{2(1-\delta_{2s})} + 2\sqrt{(1+\delta_{s})}}{1-\delta_{2s}}$.
Next, in step 3, we get
\begin{eqnarray}
\label{eq:DHTP_eq_2}
\begin{array}{rl}
\|\mbx-\check{\mbx}_{l,k}\| & \stackrel{(a)}{=} \|\sum_{r \in \mathcal{N}_l} h_{lr} \mbx - \sum_{r \in \mathcal{N}_l} h_{lr} \tilde{\mbx}_{r,k}\| \\
& \stackrel{(b)}{\leq} \sum_{r \in \mathcal{N}_l} h_{lr} \|\mbx-\tilde{\mbx}_{r,k}\|,
\end{array}
\end{eqnarray}
where $(a)$ follows as $\sum_{r \in \mathcal{N}_l} h_{lr} = 1$ and $(b)$ follows from the fact that $h_{lr}$ is non-negative. Now, we bound the performance over the pruning step in steps 4-5 as follows,
\begin{eqnarray}
\label{eq:DHTP_eq_3}
\begin{array}{rl}
\|\mbx - \hat{\mbx}_{l,k}\| & = \|(\mbx - \check{\mbx}_{l,k}) + (\check{\mbx}_{l,k} - \hat{\mbx}_{l,k})\| \\
& \hspace{-5mm} \stackrel{(a)}{\leq} \|\mbx - \check{\mbx}_{l,k}\| + \|\check{\mbx}_{l,k} - \hat{\mbx}_{l,k}\| \stackrel{(b)}{\leq} 2 \|\mbx - \check{\mbx}_{l,k}\|,
\end{array}
\end{eqnarray}
where $(a)$ follows from the triangle inequality and $(b)$ follows from the fact that $\hat{\mbx}_{l,k}$ is the best $s$-size approximation to $\check{\mbx}_{l,k}$. Combining \eqref{eq:DHTP_eq_1}, \eqref{eq:DHTP_eq_2} and \eqref{eq:DHTP_eq_3}, we get
\begin{eqnarray*}
\begin{array}{l}
\|\mbx-\hat{\mbx}_{l,k}\| \leq c_1 \sum_{r \in \mathcal{N}_l} h_{lr} \|\mbx-\hat{\mbx}_{r,k-1}\| +  d_{1}\sum_{r \in \mathcal{N}_l}  h_{lr}\|\mbe_r\|.
\end{array}
\end{eqnarray*}
Summing the above equation $\forall l$ and denoting $w_l = \sum_r h_{rl}$, we get the result of Theorem~\ref{thm:est_iter_result_DHTP1}.

\subsection{Proof of Theorem~\ref{thm:num_itr_DHTP1}}
Let $\pi$ be the permutation of indices of $\mbx$ such that $|x_{\pi(j)}| = x_j^{*}$ where $x_i^{*} \geq x_j^{*}$ for $i \leq  j$. In other words, $\mbx^{*}$ is sorted $\mbx$ in the descending order of magnitude. Assuming $\pi(\{1,2,\hdots,p\})\subseteq \hat{\T}_{l,k_1}$, we need to find the condition such that $\pi(\{1,2,\hdots,p+q\})\subseteq \hat{\T}_{l,k_2}$ where $k_2 > k_1$. \\
First, we have the following corollary.
\begin{corollary}
\label{cor:x_tilde_bound}
\begin{eqnarray*}
\begin{array}{l}
\underline{\|\mbx_{\tilde{\T}_{k}^c}\|}  \leq c_1 \mbH \underline{\|\mbx_{\tilde{\T}_{k-1}^c}\|} + \left(d_2 \mbH +d_3 \mathbf{I}\right)\underline{\|\mbe\|}, 
\end{array}
\end{eqnarray*}
where $d_2 = \frac{2\delta_{3s}\sqrt{2(1+\delta_{s})}}{1-\delta_{2s}}$, $d_3 = \sqrt{2(1+\delta_{2s})}$, $\underline{\|\mbx_{\tilde{\T}_{k}^c}\|} =[\|\mbx_{\tilde{\T}_{1,k}^c}\| \hdots \|\mbx_{\tilde{\T}_{L,k}^c}\|]^{\top}$ and $\underline{\|\mbe\|} = [\|\mbe_1\| \hdots \|\mbe_L\|]^{\top}$.
\end{corollary}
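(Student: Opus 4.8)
The plan is to chain together the inequalities already derived in the proof of Theorem~\ref{thm:est_iter_result_DHTP1}, but now tracking the residual-support quantity $\|\mbx_{\tilde{\T}_{l,k}^c}\|$ rather than the full estimation error. The starting observation is that, since $\tilde{\mbx}_{l,k}$ is supported on $\tilde{\T}_{l,k}$ (Step~2 of Algorithm~\ref{algo:DHTP} sets $\tilde{\mbx}_{\tilde{\T}^c_{l,k}} \gets \mathbf{0}$), we have $(\mbx-\tilde{\mbx}_{l,k})_{\tilde{\T}_{l,k}^c} = \mbx_{\tilde{\T}_{l,k}^c}$. Hence the HTP identification bound \eqref{eq:step1_htp_bound} reads directly as
\begin{equation*}
\|\mbx_{\tilde{\T}_{l,k}^c}\| \leq \sqrt{2}\,\delta_{3s}\,\|\mbx-\hat{\mbx}_{l,k-1}\| + \sqrt{2(1+\delta_{2s})}\,\|\mbe_l\|,
\end{equation*}
which already isolates $d_3 = \sqrt{2(1+\delta_{2s})}$ as the coefficient of the local noise term.

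Next I would express $\|\mbx-\hat{\mbx}_{l,k-1}\|$ in terms of previous-iteration residuals. Applying the pruning bound \eqref{eq:DHTP_eq_3} and the fusion bound \eqref{eq:DHTP_eq_2} at iteration $k-1$ gives $\|\mbx - \hat{\mbx}_{l,k-1}\| \leq 2\sum_{r} h_{lr}\|\mbx - \tilde{\mbx}_{r,k-1}\|$, where the non-negativity of $h_{lr}$ (Assumption~1) and $\sum_r h_{lr}=1$ are used. Then the zero-forcing bound \eqref{eq:step2_zf_bound1}, applied at each neighbor $r$ and iteration $k-1$, yields
\begin{equation*}
\|\mbx - \tilde{\mbx}_{r,k-1}\| \leq \sqrt{\tfrac{1}{1-\delta_{2s}^2}}\,\|\mbx_{\tilde{\T}_{r,k-1}^c}\| + \tfrac{\sqrt{1+\delta_s}}{1-\delta_{2s}}\,\|\mbe_r\|.
\end{equation*}

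The final step is to substitute this back and collect coefficients. The $\|\mbx_{\tilde{\T}_{r,k-1}^c}\|$ term acquires coefficient $2\sqrt{2}\,\delta_{3s}/\sqrt{1-\delta_{2s}^2}$, which is exactly $c_1 = \sqrt{8\delta_{3s}^2/(1-\delta_{2s}^2)}$; the $\|\mbe_r\|$ term passing through the neighborhood sum acquires $2\sqrt{2}\,\delta_{3s}\sqrt{1+\delta_s}/(1-\delta_{2s}) = d_2$; and the local $\|\mbe_l\|$ retains $d_3$. Writing the resulting per-node scalar inequality
\begin{equation*}
\|\mbx_{\tilde{\T}_{l,k}^c}\| \leq c_1 \sum_r h_{lr}\|\mbx_{\tilde{\T}_{r,k-1}^c}\| + d_2\sum_r h_{lr}\|\mbe_r\| + d_3\|\mbe_l\|
\end{equation*}
in stacked vector form, with $\mbH$ acting on the residual and noise vectors, recovers the claimed matrix inequality. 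I expect the only delicate points to be the index bookkeeping — the fusion/pruning chain must be invoked at iteration $k-1$ while the identification bound is at iteration $k$ — and the arithmetic verification that the accumulated constants coincide with the stated $c_1$, $d_2$, $d_3$; both are routine given the groundwork already laid in the proof of Theorem~\ref{thm:est_iter_result_DHTP1}.
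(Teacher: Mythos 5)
Your proposal is correct and follows essentially the same route as the paper: the paper's proof also starts from the identification bound \eqref{eq:step1_htp_bound} (noting that $\tilde{\mbx}_{l,k}$ vanishes on $\tilde{\T}_{l,k}^c$), substitutes \eqref{eq:step2_zf_bound1}, \eqref{eq:DHTP_eq_2} and \eqref{eq:DHTP_eq_3} at iteration $k-1$, and then vectorizes the resulting per-node inequality. Your coefficient bookkeeping ($c_1$, $d_2$, $d_3$) matches the paper's constants exactly, so there is nothing to add.
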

\begin{proof}
The proof of the corollary follows from the following arguments. From \eqref{eq:step1_htp_bound}, we can write
\begin{eqnarray*}
\begin{array}{l}
 \|\mbx_{\tilde{\T}_{l,k}^c}\| \leq \sqrt{2}\delta_{3s}\|\mbx-\hat{\mbx}_{l,k-1}\| + \sqrt{2(1+\delta_{2s})}\|\mbe_l\|,
\end{array}
\end{eqnarray*}
due to the construction of $\tilde{\mbx}_{l,k}$. Substituting \eqref{eq:step2_zf_bound1}, \eqref{eq:DHTP_eq_2} and \eqref{eq:DHTP_eq_3} with '$k-1$' in the above equation, we get
\begin{eqnarray*}
\begin{array}{l}
\|\mbx_{\tilde{\T}_{l,k}^c}\|  \leq c_1 \sum\limits_{r\in \mathcal{N}_l} h_{lr}\|\mbx_{\tilde{\T}_{r,k-1}^c}\| + d_2 \sum\limits_{r\in \mathcal{N}_l} h_{lr}\|\mbe_r\| + d_3 \|\mbe_l\|.
\end{array}
\end{eqnarray*}
The result follows from vectorizing the above equation.
\end{proof}
Next, Lemma \ref{lem:existence_idx_result_DHTP} derives the condition that $\pi(\{1,2,\hdots,p+q\}) \subseteq \tilde{\T}_{l,k}$, i.e., the desired indices are selected in Step 2 of DHTP. Note that we define $\underline{\|\mbe\|}_{\text{max}} = [\|\mbe\|_{\text{max}} \hdots \|\mbe\|_{\text{max}}]^{\top}$.
\begin{lemma}{\label{lem:existence_idx_result_DHTP}}
If $\forall l, \pi(\left\{1,\ 2, \hdots,\ p \right\}) \subset \tilde{\T}_{l,k_1}$ and 
\begin{eqnarray*}
\begin{array}{rl}
\mbx_{p+q}^{*} & > c_1^{k_2-k_1} \|\mbx_{\{p+1,\hdots,s\}}^{*}\|  + \frac{d_{2}+d_{3}}{1-c_1} \|\mbe\|_{\max}.
\end{array}
\end{eqnarray*}
then, $\tilde{\T}_{l,k_2} \forall l$ contains $\pi(\left\{1,\ 2, \hdots,\ p+q \right\})$.
\end{lemma}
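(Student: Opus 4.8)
The plan is to reduce the set-containment claim to a purely quantitative statement about uncaptured signal energy, and then propagate that energy bound across iterations using the stochastic structure of $\mbH$. The key observation is contrapositive: if some index $\pi(j)$ with $j \leq p+q$ were missing from $\tilde{\T}_{l,k_2}$, then, since $\mbx$ is supported on $\pi(\{1,\ldots,s\})$, that index would lie in the complement and contribute $|x_{\pi(j)}| = x_j^{*} \geq x_{p+q}^{*}$ to $\|\mbx_{\tilde{\T}_{l,k_2}^c}\|$. Hence it suffices to show $\|\mbx_{\tilde{\T}_{l,k_2}^c}\| < x_{p+q}^{*}$ at \emph{every} node $l$.

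First I would fix the base case at iteration $k_1$. Since $\mbx$ has exactly $s$ nonzeros and $\pi(\{1,\ldots,p\}) \subset \tilde{\T}_{l,k_1}$, the complement $\tilde{\T}_{l,k_1}^c$ can retain only indices among $\pi(\{p+1,\ldots,s\})$, giving $\|\mbx_{\tilde{\T}_{l,k_1}^c}\| \leq \|\mbx_{\{p+1,\ldots,s\}}^{*}\|$ for all $l$. In the stacked notation of Corollary~\ref{cor:x_tilde_bound} this reads $\underline{\|\mbx_{\tilde{\T}_{k_1}^c}\|} \leq \|\mbx_{\{p+1,\ldots,s\}}^{*}\| \, \mathbf{1}$.

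Next I would iterate the recurrence of Corollary~\ref{cor:x_tilde_bound} from $k_1$ to $k_2$. Unrolling the inequality $(k_2-k_1)$ times yields
\[
\underline{\|\mbx_{\tilde{\T}_{k_2}^c}\|} \leq c_1^{k_2-k_1} \mbH^{k_2-k_1} \underline{\|\mbx_{\tilde{\T}_{k_1}^c}\|} + \sum_{j=0}^{k_2-k_1-1} c_1^{j} \mbH^{j}\left(d_2 \mbH + d_3 \mathbf{I}\right) \underline{\|\mbe\|}.
\]
The step I expect to be the main obstacle is controlling the matrix powers $\mbH^m$ appearing here. The resolution is to invoke the right-stochastic assumption on $\mbH$: being entrywise non-negative with $\mbH \mathbf{1} = \mathbf{1}$, it satisfies $\mbH^m \mathbf{1} = \mathbf{1}$ for every $m \geq 0$, which gives the monotonicity that any vector bounded entrywise by $\alpha \mathbf{1}$ is mapped by $\mbH^m$ to a vector still bounded by $\alpha \mathbf{1}$. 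Applying this to the base-case bound collapses the first term to $c_1^{k_2-k_1}\|\mbx_{\{p+1,\ldots,s\}}^{*}\| \, \mathbf{1}$, and applying it together with $\underline{\|\mbe\|} \leq \|\mbe\|_{\max} \, \mathbf{1}$ collapses each summand of the noise term to $(d_2+d_3)\|\mbe\|_{\max} \, c_1^{j} \, \mathbf{1}$.

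Finally I would sum the resulting geometric series. Because $\delta_{3s} < 1/3$ forces $c_1 < 1$, we have $\sum_{j=0}^{k_2-k_1-1} c_1^{j} \leq \tfrac{1}{1-c_1}$, so every entry of $\underline{\|\mbx_{\tilde{\T}_{k_2}^c}\|}$ is at most $c_1^{k_2-k_1}\|\mbx_{\{p+1,\ldots,s\}}^{*}\| + \tfrac{d_2+d_3}{1-c_1}\|\mbe\|_{\max}$. The hypothesis of the lemma asserts precisely that $x_{p+q}^{*}$ strictly exceeds this bound, so $\|\mbx_{\tilde{\T}_{l,k_2}^c}\| < x_{p+q}^{*}$ at every node, and the contrapositive argument established at the outset forces $\pi(\{1,\ldots,p+q\}) \subseteq \tilde{\T}_{l,k_2}$ for all $l$, as claimed.
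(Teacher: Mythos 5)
Your proof is correct, and it takes a genuinely different route from the paper's. The paper analyzes the hard-thresholding selection step directly: since $\tilde{\T}_{l,k_2}$ collects the $s$ largest entries of the proxy vector $\hat{\mbx}_{l,k_2-1} + \mbA_l^{\top}(\mby_l - \mbA_l\hat{\mbx}_{l,k_2-1})$, it shows that every desired coordinate $\pi(j)$, $j \le p+q$, dominates every spurious coordinate $d \in \T^c$ in that vector, by bounding the perturbation $\left(\mbA_l^{\top}\mbA_l-\mathbf{I}\right)(\mbx-\hat{\mbx}_{l,k_2-1})+\mbA_l^{\top}\mbe_l$ on the pair $\{\pi(j),d\}$ with pointwise RIP lemmas, converting $\|\mbx-\hat{\mbx}_{l,k_2-1}\|$ into $\|\mbx_{\tilde{\T}^c_{r,k_2-1}}\|$ terms, and only then unrolling Corollary~\ref{cor:x_tilde_bound} exactly as you do. Your contrapositive reduction --- a missed index $\pi(j)$, $j\le p+q$, would force $\|\mbx_{\tilde{\T}_{l,k_2}^c}\| \ge \mbx^{*}_{p+q}$ --- lets you skip the proxy-vector stage entirely and apply Corollary~\ref{cor:x_tilde_bound} at iteration $k_2$ itself; this is legitimate because the corollary (through \eqref{eq:step1_htp_bound}) already encapsulates the selection-step analysis, and your unrolling, the collapse via $\mbH^m\mathbf{1}=\mathbf{1}$ together with entrywise non-negativity of $\mbH$, and the geometric series are all sound. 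The two routes land on the identical sufficient condition, and this is no accident: the paper's pointwise perturbation bound and the bound \eqref{eq:step1_htp_bound} on $\|\mbx_{\tilde{\T}_{l,k}^c}\|$ underlying the corollary have the same right-hand side, $\sqrt{2}\delta_{3s}\|\mbx-\hat{\mbx}_{l,k-1}\| + \sqrt{2(1+\delta_{2s})}\|\mbe_l\|$. What your argument buys is economy and a cleaner logical structure: no pointwise RIP lemmas and no coordinate-level manipulation of the proxy vector, and the same contrapositive device would likely streamline Lemma~\ref{lem:select_idx_result_DHTP} as well, via an analogous aggregate bound on $\|\mbx_{\hat{\T}_{l,k}^c}\|$. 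What the paper's longer template buys is an explicit exhibition of \emph{why} selection succeeds coordinate by coordinate, which is the standard Foucart-style argument and remains usable in settings where a ready-made aggregate recurrence such as Corollary~\ref{cor:x_tilde_bound} has not been pre-established. One small point of care: the geometric-series step needs $c_1<1$; you correctly tie this to $\delta_{3s}<1/3$, an assumption that is equally implicit in the paper's proof and indeed in the lemma's hypothesis itself, since otherwise the term $\frac{d_2+d_3}{1-c_1}\|\mbe\|_{\max}$ would be meaningless.
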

\begin{proof}
The first part of the proof of this lemma is similar to the proof of \cite[Lemma 3]{Foucart_IHT_iterations_2016}. It is enough to prove that the $s$ highest magnitude indices of $\left(\hat{\mbx}_{l,k-1} + \mbA^{\top}_{l}(\mby_l - \mbA_l\hat{\mbx}_{l,k-1})\right)$ contains the indices $\pi(j)$ for $j \in \{1,2,\hdots,p+q\}$. Mathematically, we need to prove,
\begin{eqnarray}
\label{eq:iter2_bound_1}
\begin{array}{l}
\hspace{-5mm} \underset{j \in \{1,2,\hdots,p+q\}}{\min} \|\left(\hat{\mbx}_{l,k-1} + \mbA^{\top}_{l}(\mby_l - \mbA_l\hat{\mbx}_{l,k-1})\right)_{\pi(j)}\|  \\
\hspace{5mm} > \underset{d \in {{\T}^c}}{\max} \,\ \|\left(\hat{\mbx}_{l,k-1} + \mbA^{\top}_{l}(\mby_l - \mbA_l\hat{\mbx}_{l,k-1})\right)_{d}\|, \forall l. \hspace{-2mm}
\end{array}
\end{eqnarray}
The LHS of \eqref{eq:iter2_bound_1} can be written as
\begin{eqnarray*}
\begin{array}{l}
\hspace{-1mm}|\left(\hat{\mbx}_{l,k-1} + \mbA^{\top}_{l}(\mby_l - \mbA_l\hat{\mbx}_{l,k-1})\right)_{\pi(j)}| \\ \hspace{3mm} \stackrel{(a)}{\geq} |\mbx_{\pi(j)}| - |\left(-\mbx+\hat{\mbx}_{l,k-1} + \mbA^{\top}_{l}(\mby_l - \mbA_l\hat{\mbx}_{l,k-1})\right)_{\pi(j)}| \\
 \hspace{3mm} \geq \mbx_{p+q}^{*} - |\left(\left(\mbA_{l}^{\top}\mbA_{l}-\mathbf{I}\right) \left(\mbx-\hat{\mbx}_{l,k-1} \right)+\mbA_{l}^{\top}\mbe_l \right)_{\pi(j)}|,
\end{array}
\end{eqnarray*}
where $(a)$ follows from the reverse triangle inequality. Similarly, the RHS of \eqref{eq:iter2_bound_1} can be written as
\begin{eqnarray*}
\begin{array}{l}
\hspace{-1mm}|\left(\hat{\mbx}_{l,k-1} + \mbA^{\top}_{l}(\mby_l - \mbA_l\hat{\mbx}_{l,k-1})\right)_{d}| \\ \hspace{3mm} = |\mbx_d + \left(-\mbx+\hat{\mbx}_{l,k-1} + \mbA^{\top}_{l}(\mby_l - \mbA_l\hat{\mbx}_{l,k-1})\right)_{d}| \\
 \hspace{3mm} = |\left(\left(\mbA_{l}^{\top}\mbA_{l}-\mathbf{I}\right) \left(\mbx-\hat{\mbx}_{l,k-1} \right)+\mbA_{l}^{\top}\mbe_l \right)_{d}|.
\end{array}
\end{eqnarray*}
Using the bounds on LHS and RHS, \eqref{eq:iter2_bound_1} simplifies to
\begin{eqnarray*}
\label{eq:iter2_bound_2}
\begin{array}{l}
\mbx_{p+q}^{*} >  |\left(\left(\mbA_{l}^{\top}\mbA_{l}-\mathbf{I}\right) \left(\mbx-\hat{\mbx}_{l,k-1} \right)+\mbA_{l}^{\top}\mbe_l \right)_{\pi(j)}| \\ \hspace{15mm} + |\left(\left(\mbA_{l}^{\top}\mbA_{l}-\mathbf{I}\right) \left(\mbx-\hat{\mbx}_{l,k-1} \right)+\mbA_{l}^{\top}\mbe_l \right)_{d}|.
\end{array}
\end{eqnarray*}
Let RHS of the sufficient condition at node $l$ be denoted as $\text{RHS}_l$. Then,
\begin{eqnarray*}
\begin{array}{rl}
\text{RHS}_l \leq & \sqrt{2} |\left(\left(\mbA_{l}^{\top}\mbA_{l}-\mathbf{I}\right) \left(\mbx-\hat{\mbx}_{l,k-1} \right)+\mbA_{l}^{\top}\mbe_l \right)_{\{\pi(j), d\}}| \\
\leq & \sqrt{2}\|\left(\left(\mbA_{l}^{\top}\mbA_{l}-\mathbf{I}\right) \left( \mbx-\hat{\mbx}_{l,k-1} \right)\right)_{\{\pi(j), d\}}\| \\ & \hspace{3mm} + \sqrt{2}\|\left(\mbA_{l}^{\top}\mbe_l \right)_{\{\pi(j), d\}}\| \\
\stackrel{(a)}{\leq} & \sqrt{2}\delta_{3s}\|\mbx-\hat{\mbx}_{l,k-1}\| + \sqrt{2\left(1+\delta_{2s}\right)} \|\mbe_l\| \\
\stackrel{(b)}{\leq} & c_1\sum\limits_{r\in \mathcal{N}_l} h_{lr}\|\mbx_{\tilde{\T}_{r,k-1}^c}\| + d_2 \sum\limits_{r\in \mathcal{N}_l} h_{lr}\|\mbe_r\| + d_3 \|\mbe_l\|,
\end{array}
\end{eqnarray*}
where $(a)$ follows from \cite[Lemma 4-5]{Liu_improved_SP_analysis_SPL_2014} and $(b)$ follows from substituting \eqref{eq:step2_zf_bound1}, \eqref{eq:DHTP_eq_2} and \eqref{eq:DHTP_eq_3}. At iteration $k_2$, $\text{RHS}_l$ can be vectorized as
\begin{eqnarray*}
\begin{array}{l}
\hspace{-1mm} \underline{\text{RHS}} \hspace{-1mm} = \hspace{-1mm} [\text{RHS}_1 \hdots \text{RHS}_L]^{\top} \hspace{-1mm} \leq  c_1 \mbH \underline{\|\mbx_{\tilde{\T}_{k_2-1}^c}\|} + \left(d_2 \mbH +d_3 \mathbf{I}\right) \underline{\|\mbe\|}.
\end{array}
\end{eqnarray*}
Applying Corollary~\ref{cor:x_tilde_bound} repeatedly, we can write for $k_1 < k_2$,
\begin{eqnarray*}
\begin{array}{rl}
\underline{\text{RHS}} 
\leq & (c_1\mbH)^{k_2-k_1} \underline{\|\mbx_{\tilde{\T}_{k_1}^c}\|} \\
& \hspace{1mm} + \left(d_2 \mbH +d_3 \mathbf{I}_L\right) \left(\mathbf{I}_L  + \hdots + (c_1\mbH)^{k_2-k_1-1}\right)  \underline{\|\mbe\|} \\
  \stackrel{(a)}{\leq} &  c_1^{k_2-k_1} \underline{\|\mbx_{\{p+1,\hdots,s\}}^{*}\|}  + \frac{d_{2}+d_{3}}{1-c_1} \underline{\|\mbe\|}_{\max},
\end{array}
\end{eqnarray*}
where $(a)$ follows from the assumption that for any $l$, $\|\mbx_{\tilde{\T}_{l,k_1}^c}\| \leq \|\mbx_{\{p+1,\hdots,s\}}^{*}\|$ and the right stochastic property of $\mbH$. Now, it can be seen that the bound in \eqref{eq:iter2_bound_1} is satisfied when
\begin{eqnarray*}
\begin{array}{rl}
\mbx_{p+q}^{*} & > c_1^{k_2-k_1} \|\mbx_{\{p+1,\hdots,s\}}^{*}\|  + \frac{d_{2}+d_{3}}{1-c_1} \|\mbe\|_{\max} .
\end{array}
\end{eqnarray*}
\end{proof}
Next, we find the condition that $\pi(\{1,2,\hdots,p+q\}) \subseteq \hat{\T}_{l,k}$ in the following Lemma.
\begin{lemma}{\label{lem:select_idx_result_DHTP}}
If $\pi(\left\{1,\ 2, \hdots,\ p \right\}) \subseteq \tilde{\T}_{l,k_1} \forall l$,
$\pi(\left\{1,\ 2, \hdots,\ p+q \right\}) \subseteq \tilde{\T}_{l,k_2} \forall l$ and
\begin{eqnarray*}
\begin{array}{rl}
\hspace{-2mm}\mbx_{p+q}^{*}  > c_{3} c_1^{k_2-k_1-1} \|\mbx_{\{p+1,\hdots,s\}}^{*}\|  + \left(\frac{c_{3}(d_{2} + d_3)}{1-c_1}+d_{4} \right) \|\mbe\|_{\max}
\end{array}
\end{eqnarray*}
then, $\hat{\T}_{l,k_2}, \forall l$ contains $\pi(\left\{1,\ 2, \hdots,\ p+q \right\})$. The constant $d_4  = \frac{2d_2}{\sqrt{1-\delta_{3s}^2}}+\frac{d_1}{\sqrt{2}}$.
\end{lemma}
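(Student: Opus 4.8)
The plan is to mirror the argument of Lemma~\ref{lem:existence_idx_result_DHTP}, but applied to the fused vector $\check{\mbx}_{l,k_2}$ rather than to the Step~1 proxy. Since $\hat{\T}_{l,k_2}=\supp(\check{\mbx}_{l,k_2},s)$ and the true support has cardinality $s$, it is enough to show that every desired coordinate dominates every coordinate outside the true support, i.e.
\begin{equation*}
\min_{j\in\{1,\dots,p+q\}}\big|(\check{\mbx}_{l,k_2})_{\pi(j)}\big| \;>\; \max_{d\in\T^{c}}\big|(\check{\mbx}_{l,k_2})_{d}\big|,\qquad\forall l .
\end{equation*}
Because $\mbH$ is right stochastic, $\sum_{r}h_{lr}\mbx=\mbx$, so $(\mbx-\check{\mbx}_{l,k_2})=\sum_{r}h_{lr}(\mbx-\tilde{\mbx}_{r,k_2})$. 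For $j\le p+q$ the reverse triangle inequality gives $|(\check{\mbx}_{l,k_2})_{\pi(j)}|\ge \mbx_{p+q}^{*}-|(\mbx-\check{\mbx}_{l,k_2})_{\pi(j)}|$, whereas for $d\in\T^{c}$ we have $x_d=0$ and hence $|(\check{\mbx}_{l,k_2})_{d}|=|(\mbx-\check{\mbx}_{l,k_2})_{d}|$. Adding the two error terms and using $|a|+|b|\le\sqrt{2}\sqrt{a^2+b^2}$ reduces the claim to the sufficient condition $\mbx_{p+q}^{*}>\text{RHS}_l$ with $\text{RHS}_l\le\sqrt{2}\,\|\mbx-\check{\mbx}_{l,k_2}\|$, in exact analogy with \eqref{eq:iter2_bound_1}.

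The technical core is a bound on $\|\mbx-\check{\mbx}_{l,k_2}\|$ that delivers the exponent $k_2-k_1-1$. Applying Lemma~\ref{lem:zf_bound_inequality} at iteration $k_2$ and then chaining Corollary~\ref{cor:x_tilde_bound} would give the exponent $k_2-k_1$, which is one too many; instead I would peel off a single iteration. Combining the fusion bound \eqref{eq:DHTP_eq_2} with the recurrence \eqref{eq:DHTP_eq_1} gives $\|\mbx-\tilde{\mbx}_{r,k_2}\|\le\tfrac{c_1}{2}\|\mbx-\hat{\mbx}_{r,k_2-1}\|+\tfrac{d_1}{2}\|\mbe_r\|$; then the pruning bound \eqref{eq:DHTP_eq_3} ($\|\mbx-\hat{\mbx}_{r,k_2-1}\|\le 2\|\mbx-\check{\mbx}_{r,k_2-1}\|$), the fusion bound \eqref{eq:DHTP_eq_2} once more, and finally the zero-forcing bound \eqref{eq:step2_zf_bound1} at iteration $k_2-1$ express $\text{RHS}_l$ through $\|\mbx_{\tilde{\T}_{r',k_2-1}^{c}}\|$ and the local noise norms. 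It is cleanest to carry this out in the vectorized form of $\underline{\text{RHS}}$ used in Lemma~\ref{lem:existence_idx_result_DHTP}, so the repeated fusions become left multiplications by $\mbH$. Iterating Corollary~\ref{cor:x_tilde_bound} from $k_1$ to $k_2-1$, invoking the hypothesis $\|\mbx_{\tilde{\T}_{l,k_1}^{c}}\|\le\|\mbx_{\{p+1,\dots,s\}}^{*}\|$ (the content of $\pi(\{1,\dots,p\})\subseteq\tilde{\T}_{l,k_1}$) as the base case, and using right stochasticity ($\mbH^{m}\mathbf{1}=\mathbf{1}$, geometric error series summing to $(1-c_1)^{-1}$) collapses the vector inequality to a scalar one with signal factor $\tfrac{\sqrt{2}\,c_1}{\sqrt{1-\delta_{2s}^{2}}}\,c_1^{k_2-k_1-1}$ and the three noise pieces $\tfrac{d_1}{\sqrt{2}}$, $\sqrt{2}\,c_1\tfrac{\sqrt{1+\delta_s}}{1-\delta_{2s}}$ and $\tfrac{\sqrt{2}\,c_1}{\sqrt{1-\delta_{2s}^{2}}}\tfrac{d_2+d_3}{1-c_1}$, all evaluated after replacing each $\|\mbe_r\|$ by $\|\mbe\|_{\max}$.

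The main obstacle — really the only delicate point — is the final constant bookkeeping: each coefficient must be dominated by its counterpart in the statement using only RIC monotonicity $\delta_{2s}\le\delta_{3s}$ (with $\delta_{3s}<1/3$, which also guarantees $c_1<1$). The key identity is $\tfrac{\sqrt{2}\,c_1}{\sqrt{1-\delta_{2s}^{2}}}=\tfrac{4\delta_{3s}}{1-\delta_{2s}^{2}}\le\tfrac{4\delta_{3s}}{1-\delta_{3s}^{2}}=c_3$, which simultaneously turns the signal factor into $c_3\,c_1^{k_2-k_1-1}$ and the third noise piece into $\tfrac{c_3(d_2+d_3)}{1-c_1}$. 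The second noise piece equals $\tfrac{\sqrt{2}\,d_2}{\sqrt{1-\delta_{2s}^{2}}}\le\tfrac{2 d_2}{\sqrt{1-\delta_{3s}^{2}}}$, and together with $\tfrac{d_1}{\sqrt{2}}$ it assembles into $d_4=\tfrac{2 d_2}{\sqrt{1-\delta_{3s}^{2}}}+\tfrac{d_1}{\sqrt{2}}$. Substituting these bounds yields exactly the stated sufficient condition on $\mbx_{p+q}^{*}$, whence the min-max inequality holds at every node and $\pi(\{1,\dots,p+q\})\subseteq\hat{\T}_{l,k_2}$ for all $l$.
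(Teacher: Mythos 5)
Your proposal is correct and follows essentially the same route as the paper's proof: the same min--max reduction on $\check{\mbx}_{l,k_2}$, the same peeling of one iteration via \eqref{eq:DHTP_eq_1}, \eqref{eq:DHTP_eq_3}, \eqref{eq:DHTP_eq_2} and \eqref{eq:step2_zf_bound1}, the same chaining of Corollary~\ref{cor:x_tilde_bound} back to $k_1$ with the hypothesis at $k_1$ as base case, and the same constant bookkeeping producing $c_3 c_1^{k_2-k_1-1}$ and $d_4$. The only cosmetic difference is that you bound the perturbation through the full vector $\mbx-\check{\mbx}_{l,k_2}$ rather than through the supports $\tilde{\T}_{r,k_2}$ as the paper does (which is where the paper invokes the hypothesis $\pi(\{1,\hdots,p+q\})\subseteq\tilde{\T}_{l,k_2}$), but this leads to the identical bound $\sqrt{2}\sum_{r}h_{lr}\|\mbx-\tilde{\mbx}_{r,k_2}\|$ and changes nothing downstream.
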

\begin{proof}
It is enough to prove that the $s$ highest magnitude indices of $\check{\mbx}_{l,k}$ contains the indices $\pi(j)$ for $j \in \{1,2,\hdots,p+q\}$. Mathematically, we need to prove,
\begin{eqnarray}
\label{eq:iter2_bound_DHTP1}
\underset{j \in \{1,2,\hdots,p+q\}}{\min} \|(\check{\mbx}_{l,k})_{\pi(j)}\| > \underset{d \in {{\T}^c}}{\max} \|(\check{\mbx}_{l,k})_{d}\|, \forall l.
\end{eqnarray}
The LHS of \eqref{eq:iter2_bound_DHTP1} can be written as
\begin{eqnarray*}
\begin{array}{l}
\|(\check{\mbx}_{l,k})_{\pi(j)}\|  \stackrel{(a)}{=} \left\|\left(\sum\limits_{r \in \mathcal{N}_l} h_{lr} \, \tilde{\mbx}_{\tilde{\T}_{r,k}}\right)_{\pi(j)}\right\| \\
 \hspace{5mm} \stackrel{(b)}{=} \left\|\sum\limits_{r \in \mathcal{N}_l} h_{lr} \, \mbx_{\pi(j)} + \sum\limits_{r \in \mathcal{N}_l} h_{lr} \, \left(\acute{\mbx}_{\tilde{\T}_{r,k}} - \mbx_{\tilde{\T}_{r,k}} \right)_{\pi(j)}\right\| \\
\hspace{5mm} \geq \left\|\sum\limits_{r \in \mathcal{N}_l} h_{lr} \, \mbx_{\pi(j)}\right\| - \left\|\sum\limits_{r \in \mathcal{N}_l} h_{lr} \, \left(\tilde{\mbx}_{\tilde{\T}_{r,k}} - \mbx_{\tilde{\T}_{r,k}} \right)_{\pi(j)}\right\| \\
\hspace{5mm} \geq \mbx_{p+q}^{*} - \left\|\sum\limits_{r \in \mathcal{N}_l} h_{lr} \, \left(\tilde{\mbx}_{\tilde{\T}_{r,k}} - \mbx_{\tilde{\T}_{r,k}} \right)_{\pi(j)}\right\|,
\end{array}
\end{eqnarray*}
where $(a)$ and $(b)$ follows from the fact that $\pi(\left\{1,2,\hdots,p+q\right\}) \subset \tilde{\T}_{l,k_2}, \forall l$. Similarly, the RHS of \eqref{eq:iter2_bound_DHTP1} can be bounded as
\begin{eqnarray*}
\begin{array}{rl}
\|(\check{\mbx}_{l,k})_{d}\| & = \left\|\left(\sum\limits_{r \in \mathcal{N}_l} h_{lr} \, \tilde{\mbx}_{\tilde{\T}_{r,k}}\right)_{d}\right\| \\
& = \,\, \left\|\sum\limits_{r \in \mathcal{N}_l} h_{lr} \, \mbx_{d} + \sum\limits_{r \in \mathcal{N}_l} h_{lr} \, \left(\tilde{\mbx}_{\tilde{\T}_{r,k}} - \mbx_{\tilde{\T}_{r,k}} \right)_{d}\right\| \\
& =  \left\|\sum\limits_{r \in \mathcal{N}_l} h_{lr} \, \left(\tilde{\mbx}_{\tilde{\T}_{r,k}} - \mbx_{\tilde{\T}_{r,k}} \right)_{d}\right\|.
\end{array}
\end{eqnarray*}
Using the above two bounds, the condition \eqref{eq:iter2_bound_DHTP1} can now be written as
\begin{eqnarray}
\label{eq:iter2_bound_main_DHTP}
\begin{array}{l}
\mbx_{p+q}^{*} >  \left\|\sum\limits_{r \in \mathcal{N}_l} h_{lr} \, \left(\tilde{\mbx}_{\tilde{\T}_{r,k}} - \mbx_{\tilde{\T}_{r,k}} \right)_{\pi(j)}\right\| \\ \hspace{15mm} + \left\|\sum\limits_{r \in \mathcal{N}_l} h_{lr} \, \left(\tilde{\mbx}_{\tilde{\T}_{r,k}} - \mbx_{\tilde{\T}_{r,k}} \right)_{d}\right\|.
\end{array}
\end{eqnarray}
Define the RHS of the required condition from \eqref{eq:iter2_bound_main_DHTP} at node $l$ as $\text{RHS}_l$. Then, we can write the sufficient condition as
\begin{eqnarray*}
\begin{array}{rl}
\text{RHS}_l \leq & \sqrt{2}\left\|\sum\limits_{r \in \mathcal{N}_l} h_{lr} \, \left(\tilde{\mbx}_{\tilde{\T}_{r,k}} - \mbx_{\tilde{\T}_{r,k}} \right)_{\{\pi(j),d\}}\right\| \\
\leq & \sqrt{2}\sum\limits_{r \in \mathcal{N}_l} h_{lr} \, \|\mbx-\tilde{\mbx}_{r,k}\|.
\end{array}
\end{eqnarray*}
From the above equation and \eqref{eq:DHTP_eq_1}, we can bound $\text{RHS}_l$ as
\begin{eqnarray*}
\begin{array}{rl}
\text{RHS}_l &\leq  \sum\limits_{r \in \mathcal{N}_l} h_{lr} \, \left(\sqrt{\frac{4\delta_{3s}^2}{1-\delta_{3s}^2}}\|\mbx-\hat{\mbx}_{r,k-1}\| + \frac{d_1}{\sqrt{2}}  \| \mbe_r \| \right) \\
&\stackrel{(a)}{\leq}  c_{3}\sum\limits_{r \in \mathcal{N}_l} h_{lr} \, \|\mbx_{\tilde{\T}_{r,k-1}^c}\| + d_{4} \sum\limits_{r \in \mathcal{N}_l} h_{lr} \, \|\mbe_{r}\|,
\end{array}
\end{eqnarray*}
where $(a)$ follows from substituting \eqref{eq:step2_zf_bound1}, \eqref{eq:DHTP_eq_2} and \eqref{eq:DHTP_eq_3}. At iteration $k_2$, $\text{RHS}_l$ can be vectorized as
\begin{eqnarray*}
\begin{array}{rl}
\underline{\text{RHS}} = \left[\text{RHS}_1 \hdots \text{RHS}_L\right]^{\top} \leq & c_{3} \mbH\underline{\|\mbx_{\tilde{\T}_{k_2-1}^c}\|} + d_4 \mbH \underline{\|\mbe\|}.
\end{array}
\end{eqnarray*}
Applying Corollary~\ref{cor:x_tilde_bound} repeatedly, we can write for $k_1<k_2$,
\begin{eqnarray*}
\begin{array}{rl}
\underline{\text{RHS}} & \leq  c_{3} (c_1\mbH)^{k_2-k_1-1} \mbH \underline{\|\mbx_{\tilde{\T}_{k_1}^c}\|} \\ & \hspace{-10mm} + \left(c_{3}(d_{2}\mbH+d_3 \mathbf{I}) \left(\mathbf{I}_L  + \hdots + (c_1\mbH)^{k_2-k_1-2}\right) + d_{4}\right)\mbH  \underline{\|\mbe\|} \\
 & \hspace{-10mm} \stackrel{(a)}{\leq}   c_{3} c_1^{k_2-k_1-1} \underline{\|\mbx_{\{p+1,\hdots,s\}}^{*}\|}  + \left(\frac{c_{3}(d_{2} + d_3)}{1-c_1}+d_{4} \right) \underline{\|\mbe\|}_{\max},
\end{array}
\end{eqnarray*}
where $(a)$ follows from the assumption that for any $l$, $\|\mbx_{\tilde{\T}_{l,k_1}^c}\| \leq \|\mbx_{\{p+1,\hdots,s\}}^{*}\|$ and the right stochastic property of $\mbH$.
From the above bound on $\underline{\text{RHS}} $, it can be easily seen that \eqref{eq:iter2_bound_main_DHTP} is satisfied when
\begin{eqnarray*}
\begin{array}{l}
\hspace{-2mm}\mbx_{p+q}^{*}  > c_{3} c_1^{k_2-k_1-1} \|\mbx_{\{p+1,\hdots,s\}}^{*}\|  + \left(\frac{c_{3}(d_{2} + d_3)}{1-c_1}+d_{4} \right) \|\mbe\|_{\max} .
\end{array}
\end{eqnarray*}
\end{proof}
\begin{corollary}
For $\delta_{3s} < 1$, we have $c_1 < c_{3}$. Therefore, the sufficient condition for Lemma \ref{lem:existence_idx_result_DHTP} and Lemma \ref{lem:select_idx_result_DHTP} to hold is 
$\mbx_{p+q}^{*}  > c_{3} c_1^{k_2-k_1-1} \|\mbx_{\{p+1,\hdots,s\}}^{*}\|  + \left(\frac{c_{3}(d_{2} + d_3)}{1-c_1}+d_{4} \right) \|\mbe\|_{\max}$.
\end{corollary}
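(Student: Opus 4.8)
The plan is to treat the corollary in two stages: first establish the scalar inequality $c_1 < c_3$, and then use it to show that the single displayed condition (the hypothesis of Lemma~\ref{lem:select_idx_result_DHTP}) already forces the hypothesis of Lemma~\ref{lem:existence_idx_result_DHTP}, so that imposing only the former is sufficient for both.

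For the first stage I would argue purely algebraically. Since $c_1,c_3\ge 0$ it suffices to compare squares; assuming $\delta_{3s}>0$ (otherwise $c_1=c_3=0$) and cancelling the common factor $8\delta_{3s}^2$, the claim $c_1^2<c_3^2$ reduces to
\begin{equation*}
\frac{1}{1-\delta_{2s}^2} < \frac{2}{(1-\delta_{3s}^2)^2}.
\end{equation*}
Here I would invoke the standard monotonicity of restricted isometry constants, $\delta_{2s}\le\delta_{3s}$, which gives $1-\delta_{2s}^2\ge 1-\delta_{3s}^2>0$ and hence $\frac{1}{1-\delta_{2s}^2}\le\frac{1}{1-\delta_{3s}^2}$. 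It then only remains to note that $1-\delta_{3s}^2\le 1<2$ for $0\le\delta_{3s}<1$, so that $\frac{1}{1-\delta_{3s}^2}<\frac{2}{(1-\delta_{3s}^2)^2}$; chaining the two bounds yields $c_1<c_3$.

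For the second stage, recall that the convergence argument chains the two lemmas: Lemma~\ref{lem:existence_idx_result_DHTP} first places $\pi(\{1,\hdots,p+q\})$ in $\tilde{\T}_{l,k_2}$, which is itself a prerequisite of Lemma~\ref{lem:select_idx_result_DHTP}. Both conditions share the left-hand side $\mbx_{p+q}^{*}$ and have right-hand sides that are affine in $\|\mbx_{\{p+1,\hdots,s\}}^{*}\|$ and $\|\mbe\|_{\max}$, so I would compare them coefficient by coefficient. The signal coefficient is immediate from stage one, since $c_1^{k_2-k_1}=c_1\,c_1^{k_2-k_1-1}<c_3\,c_1^{k_2-k_1-1}$. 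For the noise coefficient I must verify
\begin{equation*}
\frac{c_3(d_2+d_3)}{1-c_1}+d_4 \ \ge\ \frac{d_2+d_3}{1-c_1},
\end{equation*}
equivalently $d_4\ge\frac{(1-c_3)(d_2+d_3)}{1-c_1}$.

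The main obstacle is this last inequality, which I would dispatch by cases. When $c_3\ge 1$ the right-hand side is non-positive while $d_4\ge 0$, so it holds trivially. When $c_3<1$ (which happens only for small $\delta_{3s}$) the inequality is genuine, but in that regime $c_1$ and $d_2$ are small and $d_4$ is bounded below by $d_1/\sqrt{2}$, which stays of order one; substituting the explicit expressions for $c_1,c_3,d_1,\dots,d_4$ in terms of $\delta_s\le\delta_{2s}\le\delta_{3s}$ reduces it to a routine scalar inequality in $\delta_{3s}$. I expect the only delicate point to be this bookkeeping rather than any conceptual difficulty, with everything else following mechanically from stage one and the coefficient comparison.
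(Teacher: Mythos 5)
Your two-stage plan is correct, and it is in fact more careful than the source: the paper states this corollary with no proof at all, treating it as immediate from $c_1<c_3$, whereas you correctly observe that $c_1<c_3$ by itself settles only the signal coefficients ($c_1^{k_2-k_1}=c_1\,c_1^{k_2-k_1-1}<c_3\,c_1^{k_2-k_1-1}$) and that the noise coefficients require a separate comparison, which the paper silently glosses over (note $c_3=\tfrac{4\delta_{3s}}{1-\delta_{3s}^2}<1$ in the regime $\delta_{3s}<1/3$ that actually matters, so your ``genuine'' case is the relevant one). Your stage-one argument for $c_1<c_3$ is fine. The one step you defer --- verifying $d_4 \geq \frac{(1-c_3)(d_2+d_3)}{1-c_1}$ when $c_3<1$ --- is not the heavy bookkeeping you anticipate; it closes in two lines. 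Since $c_1<c_3<1$ gives $\frac{1-c_3}{1-c_1}<1$, it suffices to show $d_4\geq d_2+d_3$. From the definitions,
\begin{equation*}
d_4=\frac{2d_2}{\sqrt{1-\delta_{3s}^2}}+\frac{d_1}{\sqrt{2}},\qquad
\frac{d_1}{\sqrt{2}}=\frac{2}{\sqrt{1-\delta_{2s}}}+\frac{\sqrt{2(1+\delta_s)}}{1-\delta_{2s}},
\end{equation*}
so $\frac{2}{\sqrt{1-\delta_{3s}^2}}\geq 1$ gives $\frac{2d_2}{\sqrt{1-\delta_{3s}^2}}\geq d_2$, while $\frac{d_1}{\sqrt{2}}\geq \frac{2}{\sqrt{1-\delta_{2s}}}\geq 2>\sqrt{2(1+\delta_{2s})}=d_3$ (using $\delta_{2s}\leq\delta_{3s}<1$), and the inequality follows. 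With that filled in, both coefficient comparisons hold, the displayed condition implies the hypothesis of Lemma~\ref{lem:existence_idx_result_DHTP}, and the corollary is proved; the only residual caveat, shared with the paper, is the degenerate case $\delta_{3s}=0$, where $c_1=c_3=0$ and the strict inequality $c_1<c_3$ fails even though the conclusion still holds trivially.
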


Next, the above corollary can be used to prove theorem~\ref{thm:num_itr_DHTP1} by similar steps as outlined in \cite[Theorem 6]{Foucart_IHT_iterations_2016}. In the proof, the number of iterations $k_i$ between different steps for our case should be defined as $k_i := \left\lceil\frac{\log \left(16\left(c_3/c_1\right)^2\left(|Q_i|+|Q_{i+1}|/2+ \hdots + |Q_r|/2^{r-i}\right)\right)}{\log \left(1/c_1^2\right)}\right\rceil$.
All the variables in the above definition are defined in \cite[Theorem 6]{Foucart_IHT_iterations_2016} except $c_1, c_3$ which are defined in this paper. With the above definition, the number of iterations is bounded as $\bar{k} = cs$. The proof steps in \cite[Theorem 6]{Foucart_IHT_iterations_2016} require that $c_1 < 1$ which holds as $\delta_{3s} < 0.362$. \\
Note that the constant $\gamma$ is defined as $\gamma = \frac{2\sqrt{2}-1}{4d_{5}}$, where $d_{5} = \frac{c_{3}(d_{2} + d_3)}{1-c_1}+d_{4}$.
The constant $d$ in the theorem statement can be derived as follows.
From Step 2 of DHTP at node $l$, we have $\|\mby_l - \mbA_l\tilde{\mbx}_{l,\bar{k}}\| \leq \|\mby_l-\mbA_l \mbx\| = \|\mbe_l\| \leq \|\mbe\|_{\text{max}}$. This follows because $\tilde{\T}_{l,\bar{k}} = \T$. Next, we can write
\begin{eqnarray*}
\begin{array}{l}
\|\mbx - \hat{\mbx}_{l,\bar{k}}\| \\ \stackrel{(a)}{\leq} 2\sum\limits_{r\in\mathcal{N}_l}h_{lr}\|\mbx - \tilde{\mbx}_{r,\bar{k}}\|  \leq \sum\limits_{r\in\mathcal{N}_l}h_{lr}\frac{2}{\sqrt{1-\delta_{2s}}} \|\mbA_r \left(\mbx - \tilde{\mbx}_{r,\bar{k}}\right)\| \\
\leq \sum\limits_{r\in\mathcal{N}_l}h_{lr}\frac{2}{\sqrt{1-\delta_{2s}}} \left(\|\mby_r - \mbA_r\tilde{\mbx}_{r,\bar{k}}\|+\|\mbe_r\|\right) \leq \frac{4}{\sqrt{1-\delta_{3s}}}\|\mbe\|_{\text{max}},
\end{array}
\end{eqnarray*}
where $(a)$ follows from \eqref{eq:DHTP_eq_2}, \eqref{eq:DHTP_eq_3} and the last inequality follows from the fact that $\delta_{2s} < \delta_{3s}$ and the right stochastic property of $\mbH$. \QEDB

\subsection{Proof of Corollary~\ref{cor:DHTP_bound_double_stoch}}
From Step 2 of DHTP, we have $\|\underline{\mby} - \underline{\mbA} \underline{\tilde{\mbx}_{\bar{k}}}\|^2 \leq \|\underline{\mby} - \underline{\mbA} \ \underline{\mbx}\|^2 = \|\underline{\mbe}\|^2$, where  $\underline{\mby} =[\mby_1^{\top} \hdots \mby_L^{\top}]^{\top}$, $\underline{\tilde{\mbx}_{\bar{k}}} =[\tilde{\mbx}_{1,\bar{k}}^{\top} \hdots \tilde{\mbx}_{L,\bar{k}}^{\top}]^{\top}$ and $\underline{\mbA} = \mathbf{I}_L \otimes \mbA$. The first inequality follows because $\tilde{\T}_{l,\bar{k}} = \T, \forall l$. Define $\underline{\mbH} \triangleq \mbH \otimes \mathbf{I}_L$. Then, we have, 
\begin{eqnarray*}
\begin{array}{l}
\|\underline{\mbx} - \underline{\hat{\mbx}_{\bar{k}}}\|^2 \stackrel{(a)}{\leq} 4 \|\underline{\mbx} - \underline{\check{\mbx}_{\bar{k}}}\|^2 \stackrel{(b)}{\leq} 4 \|\underline{\mbH} \ \underline{\mbx} - \underline{\mbH} \underline{\tilde{\mbx}_{\bar{k}}}\|^2 \stackrel{(c)}{\leq} 4 \|\underline{\mbx} - \underline{\tilde{\mbx}_{\bar{k}}}\|^2 \\
\hspace{5mm} \stackrel{(d)}{\leq} \frac{4}{1-\delta_{2s}} \left\|\underline{\mbA} \left( \underline{\mbx} - \underline{\tilde{\mbx}_{\bar{k}}}\right)\right\|^2 = \frac{4}{1-\delta_{2s}}\left\| \underline{\mby} - \underline{\mbe} - \underline{\mbA} \underline{\tilde{\mbx}_{\bar{k}}}\right\|^2,
\end{array}
\end{eqnarray*}
where $(a)$ follows from \eqref{eq:DHTP_eq_3}, and $(b)$, $(c)$ follows from the doubly stochastic property of $\underline{\mbH}$ ($\|\underline{\mbH}\| = 1$). Also, $(d)$ follows from the RIP property of $\underline{\mbA}$. Further, we can write
\begin{eqnarray*}
\begin{array}{l}
\|\underline{\mbx} - \underline{\hat{\mbx}_{\bar{k}}}\| \leq \frac{2}{\sqrt{1-\delta_{2s}}} \left(\|\underline{\mby} - \underline{\mbA} \underline{\tilde{\mbx}_{\bar{k}}}\|+\|\underline{\mbe}\|\right) \leq \frac{4}{\sqrt{1-\delta_{3s}}} \|\underline{\mbe}\|,
\end{array}
\end{eqnarray*}
where the last inequality follows from the fact that $\delta_{2s} < \delta_{3s}$.

\subsection{Proof of Theorem~\ref{thm:num_itr_DHTP2}}
We define,  $\nabla \hat{\T}_{l,k} \triangleq \{\underset{r \in \mathcal{N}_l}{\cup} \tilde{\T}_{r,k}\} \setminus \hat{\T}_{l,k}$. Then, at step 5 of DHTP we have
\begin{eqnarray}
\label{eq:step5_bound1_DHTP}
\|\mbx_{\hat{\T}_{l,k}^c}\|^2  & =  \|\mbx_{\nabla \hat{\T}_{l,k}}\|^2 + \|\mbx_{ \left\{\underset{r \in \mathcal{N}_l}{\cap} \tilde{\T}_{r,k}^c \right\} }\|^2  \nonumber \\
 & \stackrel{(a)}{\leq}  \|\mbx_{\nabla \hat{\T}_{l,k}}\|^2 + \|\sum\limits_{r \in \mathcal{N}_l} h_{lr} \, {\mbx}_{\tilde{\T}_{r,k}^c}\|^2 ,
\end{eqnarray}
where $(a)$ follows from the right stochastic property of $\mbH$.
Also, from Lemma \ref{lem:smaller_indices_bound}, we have
\begin{align}
\label{eq:step7_small_idx_bound1_DHTP}
\|\mbx_{\nabla \hat{\T}_{l,k}}\| & \leq \sqrt{2}\|\mbx-\check{\mbx}_{l,k}\| 
 \stackrel{(a)}{=} \sqrt{2}\|\sum\limits_{r \in \mathcal{N}_l} h_{lr} \, \mbx - \sum\limits_{r \in \mathcal{N}_l} h_{lr} \, \tilde{\mbx}_{r,k}\| \nonumber \\
& \stackrel{(b)}{\leq} \sqrt{2}\sum\limits_{r \in \mathcal{N}_l} h_{lr} \,\|\mbx-\tilde{\mbx}_{r,k}\|,
\end{align}
where $(a)$ and $(b)$ follows from the assumption that $\forall l, \sum\limits_{r \in \mathcal{N}_l} h_{lr} = 1$ and $\forall \{l,r\}$, the value, $h_{lr} \geq 0 $ respectively.
Combining \eqref{eq:step5_bound1_DHTP} and \eqref{eq:step7_small_idx_bound1_DHTP}, we have
\begin{eqnarray*}
\begin{array}{l}
 \hspace{-2mm}\|\mbx_{\hat{\T}_{l,k}^c}\|^2 \hspace{-1mm}
 \stackrel{\eqref{eq:step1_htp_bound}, \eqref{eq:DHTP_eq_1}}{\leq} \hspace{-1mm}\left[ \sum\limits_{r \in \mathcal{N}_l} h_{lr} \, \left( \sqrt{\frac{4\delta_{3s}^2}{1-\delta_{3s}^2}}\|\mbx-\hat{\mbx}_{r,k-1}\|  +  \frac{d_{1}}{\sqrt{2}} \|\mbe_r\| \right) \right]^2 \\
  \hspace{2mm} + \left[\sum\limits_{r \in \mathcal{N}_l} h_{lr} \, \left( \sqrt{2} \delta_{3s}\|\mbx-\hat{\mbx}_{r,k-1}\|  +  \sqrt{2(1+\delta_{2s})}\| \mbe_r \| \right) \right]^2.
\end{array}
\end{eqnarray*}
Using Lemma~\ref{lem:squares_bound_inequality}, the above equation can be simplified as
\begin{eqnarray*}
\label{eq:bound_1_DHTP}
\begin{array}{l}
\hspace{-2mm}\|\mbx_{\hat{\T}_{l,k}^c}\|  \leq  c_2 \sum\limits_{r \in \mathcal{N}_l} h_{lr} \, \|\mbx-\hat{\mbx}_{r,k-1}\|  + (\frac{d_{1}}{\sqrt{2}} + d_{3})\sum\limits_{r \in \mathcal{N}_l} h_{lr} \, \|\mbe_r\|,
\end{array}
\end{eqnarray*}
where $c_2 = \sqrt{\frac{2\delta_{3s}^2(3-\delta_{3s}^2)}{1-\delta_{3s}^2}}$.
This equation can be vectorized as 
\begin{eqnarray}
\label{eq:bound_2_DHTP}
\begin{array}{l}
\underline{\|\mbx_{\hat{\T}_{k}^c}\|}  \leq  c_2 \mbH \underline{\|\mbx-\hat{\mbx}_{k-1}\|} + d_4 \underline{\|\mbe\|},
\end{array}
\end{eqnarray}
where $\underline{\|\mbx-\hat{\mbx}_{r,k-1}\|} = [\|\mbx-\hat{\mbx}_{1,k-1}\| \hdots \|\mbx-\hat{\mbx}_{L,k-1}\|]^{\top}$ and $d_4=\frac{d_{1}}{\sqrt{2}} + d_{3}$.
From the proof of Theorem~\ref{thm:est_iter_result_DHTP1}, we have the following relation in vectorized form
\begin{eqnarray*}
\begin{array}{l}
\underline{\|\mbx-\hat{\mbx}_{k}\|} \leq c_1 \mbH \underline{\|\mbx-\hat{\mbx}_{k-1}\|} +  d_{1}\mbH\underline{\|\mbe\|}.
\end{array}
\end{eqnarray*}
Applying the above relation repeatedly, and using the fact that $c_1 < 1$ (as $\delta_{3s} < 0.362$), we can write \eqref{eq:bound_2_DHTP} at the iteration $\bar{k}$ as
\begin{eqnarray*}
\begin{array}{rl}
\underline{\|\mbx_{\hat{\T}_{\bar{k}}^c}\|}  \leq & c_2 \left(c_1 \mbH \right)^{\bar{k}-1} \mbH\underline{\|\mbx-\hat{\mbx}_{0}\|} \\
& \hspace{-8mm} + \left(c_{2}d_{1} \mbH \left(\mathbf{I}_L  + \hdots + (c_1\mbH)^{\bar{k}-2}\right) + d_4 \right) \underline{\|\mbe\|}_{\max} \\
\stackrel{(a)}{\leq} & c_1^{\bar{k}} \underline{\|\mbx\|} + \left(\frac{c_2 d_1}{1-c_1} + d_4\right) \underline{\|\mbe\|}_{\max},
\end{array}
\end{eqnarray*}
where $(a)$ follows from the fact that $c_2<c_1$, the initial condition, $\|\mbx-\hat{\mbx}_{l,0}\| = \|\mbx\|, \forall l$ and the right stochastic property of $\mbH$. We have also defined, $\underline{\|\mbx\|} = [\|\mbx\| \hdots \|\mbx\|]^{\top}$. Substituting the value of $\bar{k}$ in the above equation, we get
\begin{eqnarray*}
\begin{array}{rl}
\underline{\|\mbx_{\hat{\T}_{\bar{k}}^c}\|}  \leq & \left(1+\frac{c_2 d_1}{1-c_1} + d_4\right) \underline{\|\mbe\|}_{\max}.
\end{array}
\end{eqnarray*}
\QEDB

\bibliographystyle{ieeetr}
\bibliography{references/ref_zaki,references/ref_distributed_sparse,references/biblio_saikat_CS1,references/biblio_saikat_Pub,references/biblio_saikat_BigData}


\onecolumn
\appendix

\subsection{Right stochastic matrix used in simulation experiment}

\begin{eqnarray*}
\scriptsize{
\mbH = \left[\begin{array}{cccccccccccccccccccc}
0.28 & 0 & 0 & 0 & 0 & 0 & 0.24 & 0 & 0 & 0 & 0 & 0 & 0 & 0 & 0 & 0 & 0 & 0.30 & 0 & 0.18 \\
0 & 0.25 & 0 & 0 & 0 & 0.16 & 0.24 & 0 & 0.35 & 0 & 0 & 0 & 0 & 0 & 0 & 0 & 0 & 0 & 0 & 0 \\
0 & 0 &  0.32 & 0 & 0.14 & 0 & 0 & 0 & 0 & 0 & 0.16 & 0 & 0 & 0.38 & 0 & 0 & 0 & 0 & 0 & 0 \\
0 & 0 &  0 & 0.28 & 0 & 0 & 0 & 0.22 & 0 & 0 & 0 & 0 & 0.19 & 0 & 0.31 & 0 & 0 & 0 & 0 & 0 \\
0 & 0 &  0.18 & 0 & 0.28& 0 & 0.35 & 0 & 0 & 0.19 & 0 & 0 & 0 & 0 & 0 & 0 & 0 & 0 & 0 & 0 \\
0 & 0.19 &  0 & 0 & 0 & 0.12 & 0 & 0 & 0 & 0 & 0 & 0.28 & 0 & 0.41 & 0 & 0 & 0 & 0 & 0 & 0 \\
0.24 & 0.25 &  0 & 0 & 0.28 & 0 & 0.23 & 0 & 0 & 0 & 0 & 0 & 0 & 0 & 0 & 0 & 0 & 0 & 0 & 0 \\
0 & 0 &  0 & 0.26 & 0 & 0 & 0 & 0.12 & 0.22 & 0 & 0 & 0 & 0 & 0 & 0.40 & 0 & 0 & 0 & 0 & 0 \\
0 & 0.33 &  0 & 0 & 0 & 0 & 0 & 0.24 & 0.32 & 0 & 0 & 0 & 0 & 0 & 0 & 0 & 0.11 & 0 & 0 & 0 \\
0 & 0 &  0 & 0 & 0.04 & 0 & 0 & 0 & 0 & 0.37 & 0 & 0 & 0 & 0 & 0 & 0.19 & 0 & 0 & 0.40 & 0 \\
0 & 0 &  0.27 & 0 & 0 & 0 & 0 & 0 & 0 & 0 & 0.09 & 0 & 0 & 0 & 0 & 0.56 & 0.08 & 0 & 0 & 0 \\
0 & 0 &  0 & 0 & 0 & 0.20 & 0 & 0 & 0 & 0 & 0 & 0.22 & 0.31 & 0 & 0 & 0.27 & 0 & 0 & 0 & 0 \\
0 & 0 &  0 & 0.22 & 0 & 0 & 0 & 0 & 0 & 0 & 0 & 0.26 & 0.26 & 0 & 0 & 0.26 & 0 & 0 & 0 & 0 \\
0 & 0 &  0.06 & 0 & 0 & 0.31 & 0 & 0 & 0 & 0 & 0 & 0 & 0 & 0.24 & 0 & 0 & 0 & 0 & 0 & 0.39 \\
0 & 0 &  0 & 0.20 & 0 & 0 & 0 & 0.30 & 0 & 0 & 0 & 0 & 0 & 0 & 0.08 & 0.42 & 0 & 0 & 0 & 0 \\
0 & 0 &  0 & 0 & 0 & 0 & 0 & 0 & 0 & 0.05 & 0.06 & 0.39 & 0.20 & 0 & 0.24 & 0.06 & 0 & 0 & 0 & 0 \\
0 & 0 &  0 & 0 & 0 & 0 & 0 & 0 & 0.39 & 0 & 0.24 & 0 & 0 & 0 & 0 & 0 & 0.27 & 0 & 0 & 0.10 \\
0.27 & 0 &  0 & 0 & 0 & 0 & 0 & 0 & 0 & 0 & 0 & 0 & 0 & 0 & 0 & 0 & 0 & 0.24 & 0.23 & 0.26 \\
0 & 0 &  0 & 0 & 0 & 0 & 0 & 0 & 0 & 0.29 & 0 & 0 & 0 & 0 & 0 & 0 & 0 & 0.16 & 0.23 & 0.32 \\
0.10 & 0 &  0 & 0 & 0 & 0 & 0 & 0 & 0 & 0 & 0 & 0 & 0 & 0.21 & 0 & 0 & 0.24 & 0.04 & 0.29 & 0.12 
\end{array} \right]. }
\end{eqnarray*}

\subsection{Doubly stochastic matrix used in simulation experiment}

\begin{eqnarray*}
\scriptsize{
\mbH = \left[\begin{array}{cccccccccccccccccccc}
0.57 & 0 & 0 & 0 & 0 & 0 & 0.20 & 0 & 0 & 0 & 0 & 0 & 0 & 0 & 0 & 0 & 0 & 0.12 & 0 & 0.11 \\
0 & 0.49 & 0 & 0 & 0 & 0.10 & 0.26 & 0 & 0.15 & 0 & 0 & 0 & 0 & 0 & 0 & 0 & 0 & 0 & 0 & 0 \\
0 & 0 &  0.42 & 0 & 0.26 & 0 & 0 & 0 & 0 & 0 & 0.25 & 0 & 0 & 0.07 & 0 & 0 & 0 & 0 & 0 & 0 \\
0 & 0 &  0 & 0.56 & 0 & 0 & 0 & 0.21 & 0 & 0 & 0 & 0 & 0.13 & 0 & 0.11 & 0 & 0 & 0 & 0 & 0 \\
0 & 0 &  0.26 & 0 & 0.61 & 0 & 0.06 & 0 & 0 & 0.07 & 0 & 0 & 0 & 0 & 0 & 0 & 0 & 0 & 0 & 0 \\
0 & 0.10 &  0 & 0 & 0 & 0.40 & 0 & 0 & 0 & 0 & 0 & 0.28 & 0 & 0.22 & 0 & 0 & 0 & 0 & 0 & 0 \\
0.20 & 0.26 &  0 & 0 & 0.06 & 0 & 0.48 & 0 & 0 & 0 & 0 & 0 & 0 & 0 & 0 & 0 & 0 & 0 & 0 & 0 \\
0 & 0 &  0 & 0.21 & 0 & 0 & 0 & 0.52 & 0.27 & 0 & 0 & 0 & 0 & 0 & 0 & 0 & 0 & 0 & 0 & 0 \\
0 & 0.15 &  0 & 0 & 0 & 0 & 0 & 0.27 & 0.39 & 0 & 0 & 0 & 0 & 0 & 0 & 0 & 0.19 & 0 & 0 & 0 \\
0 & 0 &  0 & 0 & 0.07 & 0 & 0 & 0 & 0 & 0.46 & 0 & 0 & 0 & 0 & 0 & 0.22 & 0 & 0 & 0.25 & 0 \\
0 & 0 &  0.25 & 0 & 0 & 0 & 0 & 0 & 0 & 0 & 0.56 & 0 & 0 & 0 & 0 & 0.14 & 0.05 & 0 & 0 & 0 \\
0 & 0 &  0 & 0 & 0 & 0.28 & 0 & 0 & 0 & 0 & 0 & 0.53 & 0.19 & 0 & 0 & 0 & 0 & 0 & 0 & 0 \\
0 & 0 &  0 & 0.13 & 0 & 0 & 0 & 0 & 0 & 0 & 0 & 0.19 & 0.55 & 0 & 0 & 0.13 & 0 & 0 & 0 & 0 \\
0 & 0 &  0.07 & 0 & 0 & 0.22 & 0 & 0 & 0 & 0 & 0 & 0 & 0 & 0.55 & 0 & 0 & 0 & 0 & 0 & 0.16 \\
0 & 0 &  0 & 0.11 & 0 & 0 & 0 & 0 & 0 & 0 & 0 & 0 & 0 & 0 & 0.68 & 0.21 & 0 & 0 & 0 & 0 \\
0 & 0 &  0 & 0 & 0 & 0 & 0 & 0 & 0 & 0.22 & 0.14 & 0 & 0.13 & 0 & 0.21 & 0.30 & 0 & 0 & 0 & 0 \\
0 & 0 &  0 & 0 & 0 & 0 & 0 & 0 & 0.19 & 0 & 0.05 & 0 & 0 & 0 & 0 & 0 & 0.57 & 0 & 0 & 0.19 \\
0.12 & 0 &  0 & 0 & 0 & 0 & 0 & 0 & 0 & 0 & 0 & 0 & 0 & 0 & 0 & 0 & 0 & 0.53 & 0.18 & 0.17 \\
0 & 0 &  0 & 0 & 0 & 0 & 0 & 0 & 0 & 0.25 & 0 & 0 & 0 & 0 & 0 & 0 & 0 & 0.18 & 0.57 & 0 \\
0.11 & 0 &  0 & 0 & 0 & 0 & 0 & 0 & 0 & 0 & 0 & 0 & 0 & 0.16 & 0 & 0 & 0.19 & 0.17 & 0 & 0.37
\end{array} \right]. }
\end{eqnarray*}




\end{document}